




\documentclass{ecai} 



\usepackage{latexsym}
\usepackage{amssymb}
\usepackage{amsmath}
\usepackage{amsthm}
\usepackage{booktabs}
\usepackage{enumitem}
\usepackage{graphicx}
\usepackage{color}
\allowdisplaybreaks[4]
\usepackage{multirow}
\usepackage{makecell}
\usepackage[caption=false]{subfig}
\setlength{\textfloatsep}{5pt} 



\newtheorem{theorem}{Theorem}
\newtheorem{lemma}[theorem]{Lemma}

\newtheorem{proposition}[theorem]{Proposition}

\newtheorem{definition}{Definition}

\usepackage[linesnumbered,ruled,vlined]{algorithm2e}

\newcommand{\BibTeX}{B\kern-.05em{\sc i\kern-.025em b}\kern-.08em\TeX}


\begin{document}


\begin{frontmatter}


\paperid{5761} 


\title{Multi-Hop Privacy Propagation for Differentially Private \\ Federated Learning in Social Networks}


\author[A]{\fnms{Chenchen}~\snm{Lin}\orcid{0009-0002-8473-6068}}
\author[A]{\fnms{Xuehe}~\snm{Wang}\orcid{0000-0002-6910-468X}\thanks{Corresponding Author. Email Address: wangxuehe@mail.sysu.edu.cn.}}

\address[A]{School of Artificial Intelligence, Sun Yat-sen University, China.}


\begin{abstract}
Federated learning (FL) enables collaborative model training across decentralized clients without sharing local data, thereby enhancing privacy and facilitating collaboration among clients connected via social networks. However, these social connections introduce privacy externalities: a client’s privacy loss depends not only on its privacy protection strategy but also on the privacy decisions of others, propagated through the network via multi-hop interactions. In this work, we propose a socially-aware privacy-preserving FL mechanism that systematically quantifies indirect privacy leakage through a multi-hop propagation model. We formulate the server-client interaction as a two-stage Stackelberg game, where the server, as the leader, optimizes incentive policies, and clients, as followers, strategically select their privacy budgets, which determine their privacy-preserving levels by controlling the magnitude of added noise. To mitigate information asymmetry in networked privacy estimation, we introduce a mean-field estimator to approximate the average external privacy risk. We theoretically prove the existence and convergence of the fixed point of the mean-field estimator and derive closed-form expressions for the Stackelberg Nash Equilibrium. Despite being designed from a client-centric incentive perspective, our mechanism achieves approximately-optimal social welfare, as revealed by \textit{Price of Anarchy} (PoA) analysis. Experiments on diverse datasets demonstrate that our approach significantly improves client utilities and reduces server costs while maintaining model performance, outperforming both \textit{Social-Agnostic} (SA) baselines and methods that account for social externalities.
\end{abstract}

\end{frontmatter}


\section{Introduction}

\emph{Federated Learning} (FL)~\citep{mcmahan2017communication} enables multiple clients to collaboratively train a shared model without uploading their raw local data. By maintaining data decentralization and localization, FL addresses critical privacy challenges inherent in centralized machine learning. Typical FL algorithms, such as FedAvg~\citep{mcmahan2017communication} and similar algorithms~\citep{acar2021federated,karimireddy2020scaffold,li2020federated} allow clients to train a local model on local datasets and periodically transmit updates to the central server, which aggregates these updates to reformulate a new global model.

While FL prevents direct exposure of raw data, it does not eliminate privacy risks, as model updates may inadvertently encode sensitive information about local datasets. For instance, gradient inversion attacks can reconstruct data from shared gradients~\citep{geiping2020inverting, guo2025new}, and membership inference attacks can identify whether specific data samples were included in the training set by analyzing outputs~\citep{arevalo2024task,hu2021source,xu2024robust}.
To further enhance the privacy preservation of FL against such attacks, the \emph{local differential privacy} (LDP)~\citep{dwork2014algorithmic} mechanism has emerged as a widely adopted approach for its ease of implementation and rigorous quantification of privacy loss. A growing quantity of research has explored enhancing privacy in FL by incorporating the LDP mechanism~\citep{chen2024personalized,truex2020ldp,wei2021user}, which perturbs local updates by injecting noise before model transmission to provide privacy guarantees.

However, most existing privacy-preserving mechanisms in FL assume independent client operations, without potential inter-client correlations. In practice, clients are generally connected via social networks, where their correlations reflect shared behaviors, preferences, and data distributions~\citep{yu2024correlation}. These social connections facilitate richer collaboration among clients, enabling more personalized and efficient FL training, but simultaneously, introduce \emph{external privacy risks}~\citep{yang2019socially}. Specifically, clients' privacy risk may be determined by their actions and indirectly by the behaviors of their socially connected peers via multi-hop propagation across social networks~\citep{liu2014assessment,nguyen2017probability}~\footnote{For example, in a federated recommendation system deployed over a social platform, certain clients' preferences may inadvertently reveal sensitive interests shared with their direct friends, such as similar purchase habits. In healthcare, infectious or genetic conditions in one individual can reveal sensitive health information about socially or biologically related individuals.}. If a client's model is compromised, the risk of information leakage can extend to their immediate neighbors and further propagate to second-hop users, such as friends of friends who share overlapping group memberships or activity patterns. Through such propagation, even clients with strong local privacy settings can still suffer privacy leakage due to the strategies of distant acquaintances within social networks. Thus, the privacy externality presents fundamental challenges for achieving robust privacy protection in FL over social networks. This leads to our first key question in this work:

\textit{\textbf{Key Question 1}: How to model multi-hop privacy propagation dynamics and quantify external privacy risks in social-connected FL?}

Another critical aspect of privacy-preserving FL lies in the unrealistic assumption commonly made in the existing literature: once invited by the central server, clients will participate in training unconditionally. This assumption overlooks the inherent training costs, including computational/communication overhead and the risk of privacy leakage, etc~\citep {yu2020fairness}, which are further exacerbated by \textit{external privacy risks} from networked externalities in social networks. Without a carefully designed economic incentive mechanism, egocentric clients may be unwilling to contribute their local updates to the FL systems. 

Furthermore, even if clients choose to participate, there exists a trade-off between privacy protection and model performance. On the one hand, each client hopes to enhance privacy guarantees to improve the privacy preservation level by injecting larger noise through the LDP mechanism, while it inevitably degrades the global model performance, and receives less reward from the central server~\citep{mao2024game}. On the other hand, the central server tends to gain better model performance at less cost. However,  a lower payment leads to a lower willingness of clients to sacrifice their privacy, leading to stronger privacy settings and causing worse model performance~\citep{xu2021incentive}. Balancing such trade-offs becomes even more challenging in social networks, where inter-client dependencies exaggerate privacy risks. This motivates us to study the second key question in this work:

\textit{\textbf{Key Question 2}: How to coordinate server-client interactions over social networks that account for both local privacy preservation and global model training objectives?}

Given these considerations, we innovatively propose a socially-aware privacy-preserving incentive mechanism in FL that quantifies indirect privacy leakage by the multi-hop privacy propagation model. The multi-fold contributions are summarized as follows:
\begin{itemize}
    \item \textbf{Dynamic Modeling of Multi-hop Privacy Propagation in FL:} We develop a dynamic model that characterizes multi-hop privacy propagation in FL systems over social networks, capturing how clients incur external privacy risks through multi-hop social connections influenced by others’ privacy decisions. This formulation incorporates both temporal dynamics and social topologies, offering a more realistic representation of privacy risk evolution compared to existing static or local models.


    \item \textbf{Game-Theoretic Formulation of Socially-Aware FL:} 
    We propose MPPFL, the first game-theoretic framework of privacy-preserving FL under multi-hop privacy propagation over social networks. Modeling the interaction between the central server and clients as a two-stage Stackelberg game, we derive the server’s optimal reward strategy in Stage \uppercase\expandafter{\romannumeral1} and each client’s optimal dynamic privacy budget in Stage \uppercase\expandafter{\romannumeral2} across global iterations. We theoretically demonstrate that the optimal strategy profile constitutes a Stackelberg Nash Equilibrium, ensuring strategic stability under external privacy risks through social networks.

    \item \textbf{Mean-field Estimator and Efficiency Analysis for Clients' Strategy:} We introduce a mean-field estimator to approximate external privacy risks under incomplete network information and prove the existence and uniqueness of its fixed point. Further, we quantify system efficiency via the PoA and demonstrate that MPPFL achieves approximately optimal social welfare compared to the SA strategy. Experiments on real-world datasets validate the effectiveness of MPPFL on utilities and model performance.

    

\end{itemize}

\subsection{Related Work}
\paragraph{Differential Privacy Incentives for FL.} Several works have studied incentive mechanisms with \textit{Differential Privacy} (DP) in FL, aiming to balance privacy protection and utility. Xu~et al.~\citep{xu2021incentive} proposed NICE, a DP-based incentive mechanism where clients decide privacy budgets to add Laplace noise to loss functions, and both server and clients maximize their utilities. Wang~et al.~\citep{wang2023trade} propose a dynamic privacy pricing game that allows clients to reduce noise in exchange for the server's payments, enhancing FL model utility. Mao~et al.~\citep{mao2024game} modeled privacy-preserving behaviors in cross-silo FL, improved social efficiency, and reduced client costs compared to the subgame perfect Nash equilibrium. Other works have approached the problem using Auction~\citep{ren2023differentially} and Contract mechanism~\citep{wu2021incentivizing}. However, these methods assume that client decisions are independent, which neglects the external privacy risks introduced by social connections.

\paragraph{FL over Social Networks.}Recent studies have extended FL to social networks, using social networks for personalized and collaborative training. Khan et al.~\citep{khan2021socially} proposed DDFL to form efficient clusters to enhance communication and robustness by leveraging social similarity, edge betweenness, and physical proximity. Hu et al.~\citep{hu2024federated} proposed GCAFM to capture user preferences and feature interactions while ensuring data privacy, which integrates graph convolutional autoencoders and factorization machines. He~et~al.~\citep{he2019central} considered FL in decentralized social networks with unidirectional trust. While these works utilize social structure to enhance performance, they largely overlook privacy risks introduced by social connections. Only recently have researchers started to investigate such social-induced privacy risks in FL~\citep{lin2021friend}. For example, Sun~et al.~\citep{sun2024socially} proposed SARDA, a socially-aware iterative double auction mechanism, which accounts for clients' privacy risks from their friends’ participation in FL due to data correlations. However, SARDA captures only direct social influence, ignoring the impact of multi-hop privacy propagation over social networks.

In contrast to the above works, our approach explicitly considers the propagation of privacy leakage over social networks and designs a socially-aware incentive mechanism for balancing the trade-off between local privacy preservation and global training performance.






\section{System Model and Problem Formulation}
\subsection{Standard Federated Learning Model} 
Under the standard FL framework, we assume that $N$ clients participate in FL, and each client $x_i$, $i \!\in\! \{1, 2, \ldots, N\}$ uses private and local data $\mathcal{D}_{i}$ with datasize $|\mathcal{D}_{i}|$ to train its local model. At each global iteration $t \!\in\! \{0,1, \ldots, T\}$, client $x_i$ updates its local model parameter parallelly with \emph{gradient descent} by $\boldsymbol{w_{i}(t+1)}=\boldsymbol{w(t)}-\eta \nabla F_{i}(\boldsymbol{w(t)})$,
in which $\eta$ represents the learning rate and $\nabla F_{i}(\boldsymbol{w(t)})$ is client $x_i$'s gradient at iteration $t$. Once $N$ clients send back the local parameters, the central server aggregates the local models to obtain the updated global parameter by $\boldsymbol{w(t)} \!=\! \sum_{i=1}^{N} \theta_{i} \boldsymbol{w_{i}(t)} \!=\! \sum_{i=1}^{N} \frac{|\mathcal{D}_{i}|}{\sum_{j=1}^{N} |\mathcal{D}_{j}|} \boldsymbol{w_{i}(t)}$. Then, the central server dispatches the updated global parameter to all clients for the next iteration. The goal of the FL method is to find the optimal global parameter $\boldsymbol{w}^{*}$ to minimize the global loss function $F(\boldsymbol{w})$, i.e., $\boldsymbol{w}^{*} \!=\! \text{arg}\min _{\boldsymbol{w}} F(\boldsymbol{w}) \!=\! \text{arg}\min _{\boldsymbol{w}}  \sum_{i=1}^{N} \theta_{i} F_{i}(\boldsymbol{w})$.

\subsection{$\rho$-$z$CDP Mechanism}
To protect against gradient inversion attacks while enabling iterative training in FL, we adopt the $\rho$\textit{-zero-Concentrated Differential Privacy} ($\rho$-$z$CDP) mechanism~\citep{bun2016concentrated}. In our setting, each client perturbs its local parameter with additive Gaussian noise before transmitting to the server. Specifically, at global iteration $t$, the local parameter of client $x_i$ is perturbed as $\nabla \widetilde{F}_i(\boldsymbol{w}(t)) = \nabla F_i(\boldsymbol{w}(t)) + \boldsymbol{n}_i(t)$, where Gaussian noise $\boldsymbol{n}_i(t) \sim \mathcal{N}(0, \delta_i^2(t)\boldsymbol{I}_p)$ with covariance $\delta_i^2(t) \boldsymbol{I}_p$. Further, the variance of the injected noise is analytically determined by the client’s privacy budget $\rho_{i}(t)$ of each client $x_i$ at $t$-th iteration. For a query function $Q$ with $\ell_2$-sensitivity $\Delta Q$, the Gaussian mechanism satisfies $\rho$-$z$CDP with noise variance, i.e., $\delta_i^2(t) = \frac{\Delta Q^2}{2\rho_i(t)}$. \emph{The privacy preserving level is determined by $\rho_i(t)$, where a lower $\rho_{i}(t)$ requires injecting larger noise and vice versa.} Then, we can derive the upper bound of the $\ell_2$-sensitivity $\Delta Q$ and further quantify the artificial Gaussian noise variance $\delta_i^2(t)$ as follows.
\begin{proposition} \label{proposition_varience}
At $t$-th iteration of global aggregation, the local model parameters of client $x_i$ are perturbed based on $\rho$-$z$CDP mechanism and Gaussian noise, leading to the noise variance $\delta_i^2(t)$ given by $\delta_i^2(t) \!=\! \frac{2\mathcal{S}^2}{|\mathcal{D}_i|^2\rho_{i}(t)}$, where $\mathcal{S}$ denotes the threshold.
\end{proposition}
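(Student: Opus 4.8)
The plan is to derive the noise variance $\delta_i^2(t)$ by combining the generic $\rho$-$z$CDP Gaussian-mechanism formula $\delta_i^2(t) = \frac{\Delta Q^2}{2\rho_i(t)}$ (stated in the excerpt) with an explicit upper bound on the $\ell_2$-sensitivity $\Delta Q$ of the query being privatized. The only substantive task is therefore to pin down $\Delta Q$ in terms of the problem parameters, after which the claimed expression $\delta_i^2(t) = \frac{2\mathcal{S}^2}{|\mathcal{D}_i|^2 \rho_i(t)}$ follows by direct substitution.

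First I would identify the query function $Q$ whose output client $x_i$ perturbs. Since the client transmits its local gradient $\nabla F_i(\boldsymbol{w}(t))$, the relevant sensitivity is that of the average per-sample gradient over the local dataset $\mathcal{D}_i$. Writing $\nabla F_i(\boldsymbol{w}(t)) = \frac{1}{|\mathcal{D}_i|}\sum_{d \in \mathcal{D}_i} \nabla f(\boldsymbol{w}(t), d)$, the $\ell_2$-sensitivity measures how much this average can change when a single data record in $\mathcal{D}_i$ is replaced. Next I would invoke the clipping/bounding assumption implicit in the threshold $\mathcal{S}$: each per-sample gradient is clipped (or assumed bounded) so that $\|\nabla f(\boldsymbol{w}(t), d)\|_2 \le \mathcal{S}$. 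Changing one record alters one summand, so by the triangle inequality the numerator changes by at most $2\mathcal{S}$, giving $\Delta Q \le \frac{2\mathcal{S}}{|\mathcal{D}_i|}$.

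Finally I would substitute this sensitivity bound into the Gaussian-mechanism variance formula:
\begin{equation}
\delta_i^2(t) = \frac{\Delta Q^2}{2\rho_i(t)} \le \frac{1}{2\rho_i(t)}\left(\frac{2\mathcal{S}}{|\mathcal{D}_i|}\right)^2 = \frac{2\mathcal{S}^2}{|\mathcal{D}_i|^2 \rho_i(t)},
\end{equation}
which matches the stated expression once the bound is taken as an equality (i.e., the worst-case sensitivity is attained).

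The main obstacle I anticipate is not the algebra but justifying the sensitivity bound cleanly: I must state precisely which adjacency notion is used (replace-one versus add/remove-one record) and how the threshold $\mathcal{S}$ enters, since the factor of $2$ in the numerator depends on this convention. If the paper instead uses an add/remove notion or a different normalization, the constant would shift, so I would make the clipping assumption $\|\nabla f\|_2 \le \mathcal{S}$ and the replace-one adjacency explicit at the outset to ensure the factor of $2$ is correctly accounted for and the final constant matches the claimed $2\mathcal{S}^2$.
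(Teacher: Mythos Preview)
Your proposal is correct and mirrors the paper's own proof essentially step for step: the paper also identifies the query as the average per-sample gradient, invokes $\ell_2$-clipping with threshold $\mathcal{S}$ on each per-sample gradient, applies the triangle inequality under replace-one adjacency to get $\Delta Q \le \frac{2\mathcal{S}}{|\mathcal{D}_i|}$, and then substitutes into $\delta_i^2(t)=\frac{\Delta Q^2}{2\rho_i(t)}$. Your caveat about making the clipping and adjacency conventions explicit is well placed, as the paper's wording on what exactly is clipped is slightly loose even though its computation matches your interpretation.
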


\subsection{Multi-hop Privacy Propagation Model}
Conventional DP-based FL primarily designs privacy budgets from self-assessed privacy risks. However, in practice, clients' privacy exposure is also influenced by the indirect effects of other clients’ privacy strategies within social networks. In particular, social relationships facilitate private information diffusion via multi-hop propagation paths, leading to unintended privacy risks. For instance, clients with weaker privacy settings, even if not directly connected to target clients, may still conduct impacts through intermediate relay nodes within social networks. Thus, assessing clients' privacy risk requires accounting for direct interactions and cumulative multi-hop effects.

Motivated by the above observations, we propose a \emph{Multi-hop Privacy Propagation} model to characterize \textit{external privacy risks} by incorporating indirect influences within social networks. We model the social connections among $N$ participating clients as a directed graph $\boldsymbol{G} \!=\! (\boldsymbol{X}, \boldsymbol{W})$, where $\boldsymbol{X} = \{x_1, x_2, \dots, x_N\}$ represents the client set, and $\omega_{ij} \in \boldsymbol{W}$ indicates the strength of social interaction from clients $x_j$ to $x_i$, with $\omega_{ij} \!\in\! (0,1)$ for any $i,j \in \{1, 2, \ldots, N\}$. Without loss of generality, we exclude isolated clients and assume no self-loops, i.e., $\omega_{ii}=0$. To quantify the impact of clients on individual privacy exposure, we perform row-wise normalization over $\boldsymbol{W}$, yielding a row-stochastic matrix $\boldsymbol{\widetilde{W}}$, where $\tilde{\omega}_{ij} \!=\! \frac{\omega_{ij}}{\sum\nolimits_{q=1}^{N}\omega_{iq}}$ \footnote{We introduce a lower bound $\tilde{\omega}_{\min} \leq \tilde{\omega}_{ij}$ to reflect the realistic assumption that social influence among connected clients is never entirely negligible, as even weakly connected individuals, e.g., occasional interactions or shared group membership, lead to privacy risk propagation in real-world settings.}, which aligns with \citep{he2019central} and is particularly suitable for FL where clients may have limited knowledge of the overall network structure.


Note that clients' external privacy risks are influenced by direct neighbors and by indirect privacy risks propagated through multi-hop connections. Specifically, clients $x_i$ and $x_j$ share a direct edge (i.e., $1$-hop connection), and the privacy decisions of client $x_j$ will directly impact the privacy exposure of client $x_i$. More generally, for any positive integer $K \!>\! 1$, the $K$-hop propagation accounts for privacy risks originating from neighbors of neighbors and even more distant clients. This cascading effect indicates that clients' privacy exposure is shaped by the collective decisions of directly and indirectly connected clients across social networks. To formally quantify the cumulative effects, we define the external privacy risk coefficient by $\sigma_{ij} \!=\! \sum\nolimits_{k=1}^{K}\lambda^{k-1} (\boldsymbol{\widetilde{W}}^{k})_{ij}$, where $\boldsymbol{\widetilde{W}}^{k}$ represents the $k$-th power of the matrix $\boldsymbol{\widetilde{W}}$, and $(\boldsymbol{\widetilde{W}}^{k})_{ij}$ denotes the impact strength with path length $k$ from $x_j$ to $x_i$ \footnote{We explicitly set $\sigma_{ii} = 0$ to exclude self-influence, as a client’s external privacy leakage risk stems solely from interactions with other clients in the social networks. This design aligns with the formulation of the client's composite privacy leakage risk that internal privacy loss is independently addressed by the LDP mechanism, while $\sigma_{ij}$ focuses exclusively on external privacy risks.}. The discount factor $\lambda \in (0, 1)$ regulates the attenuation of privacy leakage risk with increasing path length, ensuring the impact weakens as the number of hops grows. 



Then, we formally define the overall external privacy risks incurred by client $x_i$ at iteration $t$. Given that privacy leakage is influenced by multi-hop social interactions, the risk depends not only on their own privacy decisions but also on the privacy budgets chosen by other clients in social networks. We denote the privacy budget set of all clients by $\boldsymbol{\rho}(t) \!=\! \{\rho_{i}(t)$, $i \!\in\! \{1, 2, \ldots, N\}\}$, $t \!\in\! \{0,1, \ldots, T\}\}$. Denote the external privacy risks of client $x_i$ at iteration $t$ by $\mathcal{R}_{i}(t)$, which is the weighted sum of all clients’ privacy budgets, with weights determined by the external privacy leakage risk coefficients by $\mathcal{R}_{i}(t) = \sum\nolimits_{j=1}^{N} \sigma_{ij} \rho_{j}(t)$.
It quantifies the cumulative external privacy risks that client $x_i$ faces from other clients' privacy decisions within social networks. Thus, higher privacy budgets chosen by neighbors or accessible clients through multi-hop social ties will proportionally increase the external privacy risks incurred by client~$x_i$.

\begin{figure}[t]
\centering
\includegraphics[width=0.495\textwidth]{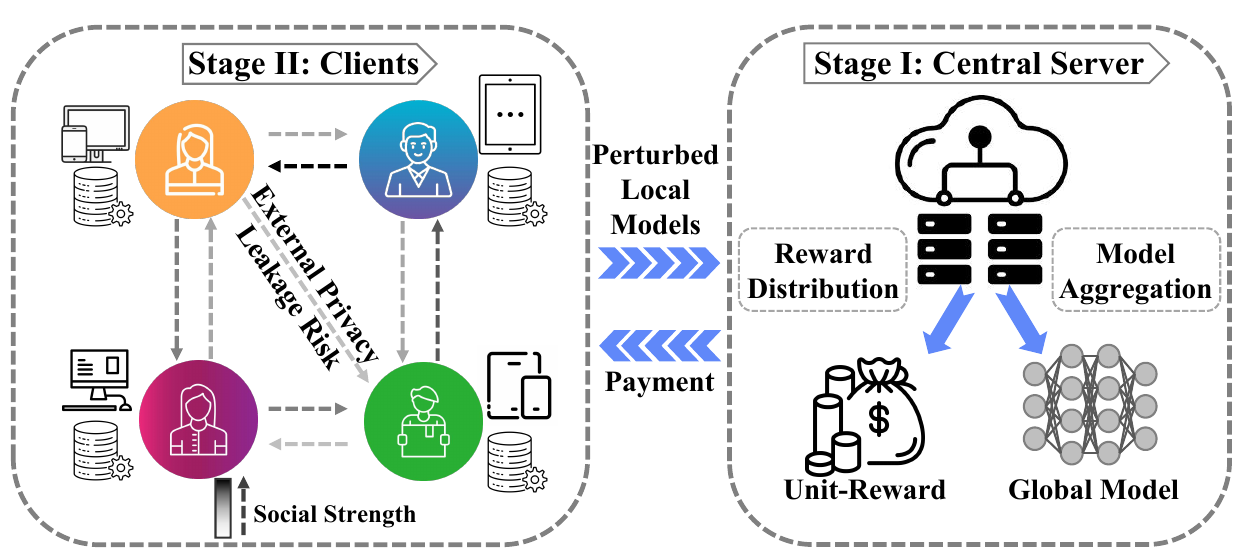}
\vspace{-20pt}
\caption{MPPFL: A socially-aware privacy-preserving incentive mechanism in FL, where the left dashed outline represents the process of local training, model perturbation, and uploading with the external privacy risks over social networks, and the right dashed outline represents the model aggregation and reward distribution by the central server.}
\label{framework}
\vspace{5pt}
\end{figure}

\subsection{Problem Formulation}
We formulate the interactions between the central server and clients as a two-stage Stackelberg game-based joint optimization problem, as shown in Figure~\ref{framework}. At each iteration, the central server \emph{(leader)} announces the unit reward, denoted by $r(t)$, to influence clients’ privacy strategy and minimize its cost $U_{t}$. In response, each client \emph{(follower)} independently chooses the privacy budget $\rho_{i}(t)$ to maximize its utility $U_{i}$, considering the trade-off between received rewards and training costs. Notably, clients incur additional privacy costs from external privacy risks propagated via multi-hop social interactions.


\paragraph{Central Server's Cost Design in Stage \uppercase\expandafter{\romannumeral1}.} The central server's cost function can be defined as the model performance loss, which relates to the accuracy loss of the global model and the reward allocated to clients. Since the client-side privacy budget vector $\boldsymbol{\rho} \!=\! \{\boldsymbol{\rho}(t), t \!\in\! \{0,\\ 1, \ldots, T\}\}$, $\boldsymbol{\rho}(t) \!=\! \{\rho_{i}(t), i \!\in\! \{1, 2, \ldots, N\}\}$ impacts the magnitude of noise injected into local parameters, degrading the global model accuracy. Suppose that the gradient norm at iteration $t$, denoted by $\|\nabla F(\boldsymbol{w}(t))\|_2$, is upper bounded by a positive constant $\mathcal{E}$, similarly in~\citep{ding2023incentive, yuan2025game}. Then, we analyze the accuracy loss of the global model with clients' privacy budgets in the following proposition.
\begin{proposition}[Accuracy Loss of Global Model] \label{global_accuracy_loss} 
With the optimal global model parameter $\boldsymbol{w^{*}}$, the accuracy loss of the global model at $t$-th iteration is upper bounded by:
\begin{align} \label{accuracy_loss}
\mathbb{E}[F(\boldsymbol{w(t)}) - F(\boldsymbol{w^{*}})] \leq \frac{\beta \mathcal{E}^{2}}{2\mu^{2}t} + \sum\nolimits_{i=1}^{N} \frac{\epsilon_{i}}{t \rho_{i}(t)},
\end{align}
where the personalized hyperparameter $\epsilon_{i} = \frac{ p \beta \mathcal{S}^{2}\theta_{i}^{2}}{\mu^{2} |\mathcal{D}_{i}|^{2}}$ and $p$ denotes the dimension of model parameters.
\end{proposition}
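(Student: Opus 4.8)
The plan is to reduce the function-value gap to an iterate-distance bound and then control that distance through the noisy gradient-descent recursion induced by aggregation. First I would assemble the regularity assumptions implicit in the statement: $F$ is $\mu$-strongly convex and $\beta$-smooth with minimizer $\boldsymbol{w}^*$ (so $\nabla F(\boldsymbol{w}^*)=\boldsymbol{0}$), the per-client perturbations $\boldsymbol{n}_i(t)\sim\mathcal{N}(0,\delta_i^2(t)\boldsymbol{I}_p)$ are zero-mean and independent across clients, and $\|\nabla F(\boldsymbol{w}(t))\|_2\le\mathcal{E}$. Aggregating the perturbed local steps $\boldsymbol{w}_i(t{+}1)=\boldsymbol{w}(t)-\eta(\nabla F_i(\boldsymbol{w}(t))+\boldsymbol{n}_i(t))$ with weights $\theta_i$, the global iterate obeys $\boldsymbol{w}(t{+}1)=\boldsymbol{w}(t)-\eta(\nabla F(\boldsymbol{w}(t))+\boldsymbol{n}(t))$ with aggregate noise $\boldsymbol{n}(t)=\sum_{i=1}^N\theta_i\boldsymbol{n}_i(t)$; by independence, $\mathbb{E}\|\boldsymbol{n}(t)\|_2^2=\sum_{i=1}^N\theta_i^2\,p\,\delta_i^2(t)$.

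The first step is a smoothness reduction: evaluating the $\beta$-smooth quadratic upper bound of $F$ at $\boldsymbol{w}(t)$ expanded about $\boldsymbol{w}^*$ and using $\nabla F(\boldsymbol{w}^*)=\boldsymbol{0}$ gives $F(\boldsymbol{w}(t))-F(\boldsymbol{w}^*)\le\frac{\beta}{2}\|\boldsymbol{w}(t)-\boldsymbol{w}^*\|_2^2$, so it suffices to bound the expected squared iterate error $a_t:=\mathbb{E}\|\boldsymbol{w}(t)-\boldsymbol{w}^*\|_2^2$. The second step derives a contraction: expanding $\|\boldsymbol{w}(t{+}1)-\boldsymbol{w}^*\|_2^2$, taking expectation over $\boldsymbol{n}(t)$ (whose cross term vanishes since it is zero-mean and independent of $\boldsymbol{w}(t)$), invoking $\mu$-strong convexity through $\langle\nabla F(\boldsymbol{w}(t)),\boldsymbol{w}(t)-\boldsymbol{w}^*\rangle\ge\mu\|\boldsymbol{w}(t)-\boldsymbol{w}^*\|_2^2$, and bounding $\|\nabla F(\boldsymbol{w}(t))\|_2^2\le\mathcal{E}^2$ yields $a_{t+1}\le(1-2\eta\mu)a_t+\eta^2(\mathcal{E}^2+\mathbb{E}\|\boldsymbol{n}(t)\|_2^2)$.

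The third step unrolls this recursion under a diminishing schedule $\eta=\eta_t=\Theta(1/(\mu t))$, which turns the contraction factor into $(1-2/t)$ and, by the standard $1/t$-induction for strongly convex stochastic descent, gives $a_t\le\frac{\mathcal{E}^2+\mathbb{E}\|\boldsymbol{n}(t)\|_2^2}{\mu^2 t}$. Combining with the smoothness reduction yields $\mathbb{E}[F(\boldsymbol{w}(t))-F(\boldsymbol{w}^*)]\le\frac{\beta}{2\mu^2 t}\big(\mathcal{E}^2+\mathbb{E}\|\boldsymbol{n}(t)\|_2^2\big)$. The final step substitutes the noise variance from Proposition~\ref{proposition_varience}, $\delta_i^2(t)=\frac{2\mathcal{S}^2}{|\mathcal{D}_i|^2\rho_i(t)}$, so that $\frac{\beta}{2\mu^2 t}\mathbb{E}\|\boldsymbol{n}(t)\|_2^2=\frac{\beta}{2\mu^2 t}\sum_{i=1}^N\theta_i^2\,p\,\frac{2\mathcal{S}^2}{|\mathcal{D}_i|^2\rho_i(t)}=\sum_{i=1}^N\frac{\epsilon_i}{t\rho_i(t)}$ with $\epsilon_i=\frac{p\beta\mathcal{S}^2\theta_i^2}{\mu^2|\mathcal{D}_i|^2}$, while the $\mathcal{E}^2$ term becomes $\frac{\beta\mathcal{E}^2}{2\mu^2 t}$, reproducing exactly~(\ref{accuracy_loss}).

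The main obstacle is the third step: because the injected variance $\delta_i^2(t)$, and hence $\mathbb{E}\|\boldsymbol{n}(t)\|_2^2$, varies with the per-iteration budget $\rho_i(t)$, a fully rigorous unrolling accumulates the noise of all past rounds rather than only the current one, so recovering the clean single-iteration expression in~(\ref{accuracy_loss}) requires either freezing the budget over the horizon (treating $\rho_i(t)$ as the representative per-round value) or absorbing the accumulated-variance factor into the $1/t$ rate through the step-size tuning; pinning down the precise constant $\frac{1}{\mu^2}$ in the induction is the delicate part. The remaining steps are routine consequences of $\beta$-smoothness, $\mu$-strong convexity, and the zero-mean independence of the Gaussian perturbations.
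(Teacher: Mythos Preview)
Your proposal is correct and reaches the stated bound, but the paper gets there by a shorter route. Rather than passing through the iterate distance $\|\boldsymbol{w}(t)-\boldsymbol{w}^*\|_2^2$ and unrolling a contraction recursion, the paper invokes a known convergence lemma (Rakhlin et al., 2011) which, under $\beta$-smoothness and the Polyak--\L{}ojasiewicz condition with constant $\mu$, directly gives
\[
\mathbb{E}[F(\boldsymbol{w}(t))-F(\boldsymbol{w}^*)]\;\le\;\frac{\beta}{2\mu^2 t}\,\mathbb{E}\bigl[\|\nabla\widetilde{F}(\boldsymbol{w}(t))\|_2^2\bigr],
\]
so the entire recursion-and-unrolling portion of your argument is replaced by a single citation. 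From there the paper proceeds exactly as you do: expand $\nabla\widetilde{F}=\nabla F+\sum_i\theta_i\boldsymbol{n}_i$, use zero-mean independence to obtain $\mathbb{E}\|\nabla\widetilde{F}\|_2^2\le\mathcal{E}^2+p\sum_i\theta_i^2\delta_i^2(t)$, and substitute $\delta_i^2(t)=2\mathcal{S}^2/(|\mathcal{D}_i|^2\rho_i(t))$ from Proposition~\ref{proposition_varience}. Two minor contrasts: the paper assumes only the PL inequality (weaker than your strong-convexity assumption, though both yield the same constant here), and by black-boxing the convergence rate it sidesteps the time-varying-variance issue you flagged in your third step---the clean single-iteration form of~(\ref{accuracy_loss}) is inherited from the cited lemma rather than derived from first principles. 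Your route is more self-contained and makes the step-size dependence explicit; the paper's is terser but leans on an external result.
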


Proposition \ref{global_accuracy_loss} highlights the impact of factors, i.e., local aggregation weight $\theta_i$, datasize $|\mathcal{D}_{i}|$, and privacy budgets $\rho_{i}(t)$, on the global model’s accuracy. To specifically capture the influence of client-related factors on global model performance, we define a metric $\sum\nolimits_{i=1}^{N} \frac{\epsilon_{i}}{t \rho_{i}(t)}$. Meanwhile, the total reward paid by the central server to all clients at the $t$-th iteration is expressed as $\sum\nolimits_{i=1}^{N} r(t)\rho_{i}(t)$. Thus, the cost function of the central server is defined as follows:
\begin{align}\label{central_server_cost}
\text{\!\!\!\footnotesize$U_{t}(r(t), \boldsymbol{\rho}(t)) \!=\!\tau \! 
\sum\nolimits_{i=1}^{N}\! \frac{\epsilon_{i}}{t \rho_{i}(t)} \!+\! (1 \!-\! \tau)\! \sum\nolimits_{i=1}^{N}\! r(t)\rho_{i}(t)$}, \!\!\!
\end{align}
where weight $\tau \in (0, 1)$ balances the trade-off between the accuracy loss and payment term.

\paragraph{Clients’ Utility Design in Stage \uppercase\expandafter{\romannumeral2}.} To incentivize client participation while accounting for privacy preferences, we define the reward received by client $x_{i}$ at iteration $t$ by $r(t)\rho_{i}(t)$. Specifically, clients with higher privacy budgets introduce less perturbation, thereby receiving greater rewards and promoting model performance. Conversely, strict privacy constraints result in lower rewards, reflecting the trade-offs between privacy preservation and contribution quality. To quantify the privacy cost of client $x_{i}$ at iteration $t$, which corresponds to the compensation required for tolerating privacy leakage, we follow a widely adopted approach in previous studies~\citep{zhan2019free, xu2023aoi} and model clients' costs by utilizing convex quadratic functions with coefficients $a_{i},b_{i} > 0$, formulated as $a_{i}s_{i}^{2}(t) + b_{i}s_{i}(t)$, where $s_{i}(t) = \\ \rho_i(t)+\alpha\mathcal{R}_{i}(t)$ denotes the composite privacy leakage risks, discount factor $\alpha \!\in\! (0, 1)$. Specifically, $s_{i}(t)$ comprises two components, i.e., \textit{internal privacy risks}, which arises from noise injected into the local model governed by privacy budget $\rho_i(t)$, and \textit{external privacy risks}, which stems from information exposure due to $K$-hop privacy propagation in social networks, captured by $\mathcal{R}_{i}(t)$. Thus, the utility function of each client is defined as the difference between the reward gained from the central server and the total costs at $t$-th iteration:
\begin{align}\label{client_utility}
\text{\!\!\!\!\!\!\footnotesize$U_{i}(r(t), \rho_{i}(t), \boldsymbol{\rho}_{\text{-}i}(t)) \!=\! r(t)\rho_{i}(t) \!-\! C_i \!-\! ( a_{i}s_{i}^{2}(t) \!+\! b_{i}s_{i}(t))$}, \!\!\!\!\!\!
\end{align}
where $\boldsymbol{\rho}_{\text{-}i}(t) \!=\! \{\rho_{1}(t),\ldots, \rho_{i\text{-}1}(t), \rho_{i\text{+}1}(t), \ldots,\rho_{N}(t)\}$. $C_i \!=\! \kappa_{i}\xi_{i}f_{i}|\mathcal{D}_{i}|L$ denotes the computation cost of client $x_{i}$ to complete the computation task similar to~\citep{ng2020joint,zhan2020incentive}, where $\kappa_i$, $\xi_i$, and $f_i$ denote the CPU architecture factor, CPU cycles, and clock frequency.

\section{Stackelberg Nash Equilibrium Analysis}

\subsection{Determining the Optimal Strategy Group}
At iteration $t$, the privacy budget $\rho_{i}(t)$ of client $x_i$ is inherently coupled with budgets $\rho_{j}(t)$ of other clients, where $j \neq i$, as dictated by the dynamic and nonlinear optimization problem in Eq.~(\ref{client_utility}). However, clients operate in isolation and do not exchange private information during local training, which renders it impractical for any client to acquire the real-time strategies of others, thereby complicating the design of jointly optimal strategies across clients over time. To overcome this challenge, we adopt the \textit{mean-field approximation}, where each client uses estimated average external privacy risks, including privacy budgets of other clients, to make decentralized decisions without accessing any private information from others. 

\begin{definition}[Mean-Field Estimator]\label{mean_field} 
To obtain the optimal privacy budget for each client, a mean-field estimator $\phi_{i}(t)$ is introduced to approximate the average external privacy risks:
\begin{align}
\phi_{i}(t) =\frac{1}{N} \mathcal{R}_{i}(t) = \frac{1}{N} \sum\nolimits_{j = 1}^{N} \sigma_{ij} \rho_{j}(t).
\end{align}
\end{definition}

From the mathematical point of view, the mean-field estimator $\phi_{i}(t)$ is a given function in our FL optimization problem. Thus, the objective function of client $x_i$ in Eq.~(\ref{client_utility}) can be further refined as:
\begin{align}\label{client_utility_transform}
\hat{U}_{i}(r(t), \rho_{i}(t), \boldsymbol{\rho}_{\text{-}i}(t)) \!=&\ r(t)\rho_{i}(t) \!-\! C_i \!-\! [a_{i}(\rho_{i}(t) +\! \alpha N \phi_{i}(t))^{2}   \nonumber \\
&\!+\! b_{i}(\rho_{i}(t) \!+\! \alpha N \phi_{i}(t)) ] . \!\!
\end{align}

To determine the optimal strategy profile $(r^{*}(t), \boldsymbol{\rho}^{*}(t))$, we adopt the backward induction approach and begin by analyzing Stage \uppercase\expandafter{\romannumeral2}, where each client determines its optimal privacy budget $\rho^{*}_{i}(t)$ by maximizing $\hat{U}_{i}$ for any given unit-reward $r(t)$. In Stage \uppercase\expandafter{\romannumeral1}, the central server decides the optimal unit-reward $r^{*}(t)$ by minimizing $U_{t}$, considering clients’ best responses. Then, the closed-form optimal privacy budget $\rho_i^*(t)$ is summarized in the following theorem.
 
\begin{theorem}[Optimal Privacy Budget]\label{optimal_rho} 
In Stage \uppercase\expandafter{\romannumeral2}, given the mean-field estimator $\phi_{i}(t)$ of each client over time, the privacy budget of client $x_i$, $i \!\in\! \{1, 2, \ldots, N\}$ at iteration $t \!\in\! \{0,1, \ldots, T\}$ is given~by:
\begin{align}\label{optimal_privacy_budget}
\rho_i^*(t) =\frac{r(t) - b_{i}}{2a_{i}} - \alpha N \phi_i(t).
\end{align}
\end{theorem}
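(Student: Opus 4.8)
The plan is to treat this as an unconstrained single-variable maximization of the objective $\hat{U}_i$ from Eq.~(\ref{client_utility_transform}), exploiting the fact that under the mean-field approximation the estimator $\phi_i(t)$ is a given function and hence does not vary with the decision variable $\rho_i(t)$. This is consistent with the modeling convention $\sigma_{ii}=0$, so that client $x_i$'s own budget never enters $\mathcal{R}_i(t)$ and therefore never enters $\phi_i(t)$. With $\phi_i(t)$ held fixed and the unit-reward $r(t)$ given, maximizing $\hat{U}_i$ over $\rho_i(t)$ reduces to maximizing a scalar quadratic.

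First I would write $s_i(t) = \rho_i(t) + \alpha N \phi_i(t)$, note that $\partial s_i(t)/\partial \rho_i(t) = 1$, and differentiate Eq.~(\ref{client_utility_transform}) to obtain the first-order condition
\begin{align*}
\frac{\partial \hat{U}_i}{\partial \rho_i(t)} = r(t) - 2 a_i\bigl(\rho_i(t) + \alpha N \phi_i(t)\bigr) - b_i .
\end{align*}
Setting this equal to zero and solving for $\rho_i(t)$ yields exactly the claimed closed form in Eq.~(\ref{optimal_privacy_budget}).

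Next I would confirm that this stationary point is the global maximizer rather than a minimizer or saddle point. The second derivative is $\partial^2 \hat{U}_i/\partial \rho_i(t)^2 = -2 a_i$, which is strictly negative because $a_i > 0$. Hence $\hat{U}_i$ is strictly concave in $\rho_i(t)$, so the unique critical point is the unique global maximizer, establishing both existence and uniqueness of $\rho_i^*(t)$.

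Since the computation is elementary, there is no serious analytic obstacle; the only conceptual care needed is the justification that $\phi_i(t)$ be held constant during differentiation. This is precisely the content of the mean-field approximation in Definition~\ref{mean_field} together with the $\sigma_{ii}=0$ convention, without which the derivative would acquire an extra coupling term from $\rho_i(t)$ re-entering $\mathcal{R}_i(t)$. I would also note in passing that the closed form is an interior solution: to guarantee that it lies in the admissible range of privacy budgets, in particular $\rho_i^*(t) \geq 0$, one needs $r(t) \geq b_i + 2 a_i \alpha N \phi_i(t)$, and otherwise the maximizer would be obtained by projecting onto the feasible boundary.
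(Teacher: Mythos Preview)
Your proposal is correct and follows essentially the same approach as the paper: compute the first- and second-order derivatives of $\hat{U}_i$ in Eq.~(\ref{client_utility_transform}) with $\phi_i(t)$ held fixed, note strict concavity since $-2a_i<0$, and solve the first-order condition to obtain Eq.~(\ref{optimal_privacy_budget}). Your additional remarks on why $\phi_i(t)$ may be treated as constant and on the interior-solution condition $r(t)\geq b_i+2a_i\alpha N\phi_i(t)$ are useful clarifications that go slightly beyond the paper's sketch.
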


\begin{proof}[Proof Sketch]
By analyzing the first and second-order derivatives of the utility function $\hat{U}_{i}$ in Eq.~(\ref{client_utility_transform}), we demonstrate that $\hat{U}_{i}$ is strictly concave in $\rho_{i}(t)$ as the second-order derivative is negative, which guarantees the unique optimal privacy budget $\rho_i^*(t)$ in closed-form by setting the first-order derivative to zero. Therefore, the client’s best strategy can be efficiently derived as shown in Eq.~(\ref{optimal_privacy_budget}).
\end{proof}


Building on the optimal client-side privacy budget $\rho^{*}_{i}(t)$ in Theorem~\ref{optimal_rho}, the optimal unit-reward $r^{*}(t)$ for the central server in Stage~\uppercase\expandafter{\romannumeral1} can be obtained by substituting Eq.~(\ref{optimal_privacy_budget}) into the server’s cost function in Eq.~(\ref{central_server_cost}), therefore formulating the following lemma.
\begin{lemma}[Optimal Unit-Reward]\label{optimal_unit_reward_lemma}
In Stage~\uppercase\expandafter{\romannumeral1}, according to the optimal privacy budget $\rho^{*}_{i}(t)$ in Theorem~\ref{optimal_rho}, the optimal unit-reward $r^{*}(t)$ of the central server at iteration $t \in {0,1,\ldots,T}$ is derived as:
\begin{align}\label{optimal_unit_reward}
\text{\!\!\!\!\!\! \footnotesize $r^{*}(t) \!=\! \frac{\tau}{1 \!-\! \tau}\! \sum\nolimits_{i=1}^{N}  [ \frac{2 a_{i} \epsilon_{i} / (\sum\nolimits_{i=1}^{N}a^{-1}_{i}) }{t (r(t) - b_{i}- 2a_{i} \alpha N \phi_{i}(t))^2} ]  \!+\! \frac{H(t)}{\sum\nolimits_{i=1}^{N}a^{-1}_{i}}, $ \!\!\!\!}
\end{align}
where $H(t)= \sum\nolimits_{i=1}^{N}[ \frac{b_{i}}{2a_{i}} + \alpha N \phi_{i}(t) ]$.
\end{lemma}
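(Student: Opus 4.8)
The plan is to obtain $r^*(t)$ from the first-order optimality condition for the server's cost $U_t$ in Eq.~(\ref{central_server_cost}), after eliminating the clients' strategies by substituting their best responses from Theorem~\ref{optimal_rho}. First I would plug $\rho_i^*(t) = \frac{r(t)-b_i}{2a_i} - \alpha N \phi_i(t)$ into $U_t(r(t), \boldsymbol{\rho}(t))$, so that the cost becomes a single-variable function of $r(t)$, with each $\phi_i(t)$ treated as a given function under the mean-field assumption of Definition~\ref{mean_field}. It is convenient to rewrite $\rho_i^*(t) = \frac{1}{2a_i}(r(t) - b_i - 2a_i \alpha N \phi_i(t))$, which makes the dependence on $r(t)$ explicitly affine and streamlines the subsequent squaring step.

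Next I would differentiate with respect to $r(t)$, using $\partial \rho_i^*(t)/\partial r(t) = \frac{1}{2a_i}$. The accuracy-loss term $\tau \sum\nolimits_{i=1}^{N} \frac{\epsilon_i}{t \rho_i^*(t)}$ contributes $-\tau \sum\nolimits_{i=1}^{N} \frac{\epsilon_i}{2 a_i t (\rho_i^*(t))^2}$, while the payment term $(1-\tau)\sum\nolimits_{i=1}^{N} r(t)\rho_i^*(t)$ contributes $(1-\tau)\sum\nolimits_{i=1}^{N} [\rho_i^*(t) + \frac{r(t)}{2a_i}]$. Setting the total derivative to zero and collecting the terms linear in $r(t)$, via $\sum\nolimits_{i=1}^{N} \rho_i^*(t) + \sum\nolimits_{i=1}^{N} \frac{r(t)}{2a_i} = r(t)\sum\nolimits_{i=1}^{N} a_i^{-1} - \sum\nolimits_{i=1}^{N} \frac{b_i}{2a_i} - \alpha N \sum\nolimits_{i=1}^{N} \phi_i(t)$, yields the stationarity equation. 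Dividing through by $(1-\tau)\sum\nolimits_{i=1}^{N} a_i^{-1}$ and substituting $(\rho_i^*(t))^2 = \frac{1}{4a_i^2}(r(t)-b_i-2a_i\alpha N\phi_i(t))^2$ into the accuracy-loss sum recovers exactly Eq.~(\ref{optimal_unit_reward}), with the constant part collapsing into $H(t)/\sum\nolimits_{i=1}^{N} a_i^{-1}$.

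To confirm that this stationary point is the minimizer, I would verify that $U_t$ is convex in $r(t)$ on the feasible region where $\rho_i^*(t) > 0$: the payment term is quadratic in $r(t)$ with positive leading coefficient $(1-\tau)\sum\nolimits_{i=1}^{N} \frac{1}{2a_i}$, and each accuracy-loss summand $\frac{1}{\rho_i^*(t)}$ is a convex function of $r(t)$, since $\rho_i^*(t)$ is affine and positive so its reciprocal has second derivative $\frac{2(\partial_r\rho_i^*)^2}{(\rho_i^*)^3} > 0$. Hence $U_t$ is strictly convex and the first-order condition pins down the unique optimum.

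The main obstacle is that Eq.~(\ref{optimal_unit_reward}) is implicit: $r(t)$ appears inside the squared denominators on the right-hand side, so it characterizes $r^*(t)$ as a fixed point rather than a literal closed form. The delicate part of the argument is therefore establishing that this fixed-point equation is well-posed, in particular ensuring $r^*(t) > b_i + 2a_i\alpha N\phi_i(t)$ for every $i$ so that each $\rho_i^*(t)$ remains positive (keeping the derivation on the convex branch and the rewards meaningful), together with existence and uniqueness of the solution. I expect the positivity and feasibility bookkeeping, and the existence-uniqueness of the fixed point, to demand the most care, likely relying on a monotonicity or contraction argument rather than the routine differentiation above.
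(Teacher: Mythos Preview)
Your proposal is correct and follows essentially the same route as the paper: substitute the Stage~II best response $\rho_i^*(t)$ into $U_t$, differentiate in $r(t)$, verify strict convexity via the second derivative, and read off Eq.~(\ref{optimal_unit_reward}) from the first-order condition. The only minor difference is that the paper secures existence of the stationary point by checking the boundary behavior $\lim_{r(t)\to 0}\partial_r U_t = -\infty$ and $\lim_{r(t)\to +\infty}\partial_r U_t = +\infty$, whereas you defer existence to the fixed-point discussion at the end; your concerns about positivity of $\rho_i^*(t)$ and well-posedness of the implicit equation are valid but go beyond what the paper actually proves here.
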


\begin{algorithm}[t]
\small 
\caption{Iterative Calculation of Fixed Point $\phi_i(t)$} 
\label{alg_fix_point}
\SetAlgoLined
\KwIn{$N$, $T$, edge set $\boldsymbol{W}$, the threshold $\epsilon_{0}$, $m \!=\! 1$, arbitrary initial value for $\epsilon$, $\rho_{i}(t)$, $\phi_{i}(t)$, $r(t)$, where $t \!\in\! \{0,1, \ldots, T\}$, $i \!\in\! \{1,2,\ldots,N\}$.}
\KwOut{The fixed point of $\phi_i(t)$, $t \!\in\! \{0,1, \ldots, T\}$, $i \!\in\! \{1,2,\ldots,N\}$.}
\For{$t \rightarrow 0, 1, \ldots, T$}{
    \While{$\epsilon > \epsilon_{0}$}{
        \For{$i \rightarrow 1, 2, \ldots, N$}{
            Update $\rho_{i}^m(t)$ according to Eq.~(\ref{optimal_privacy_budget}) \;
        }
        \For{$t \rightarrow 0, 1, \ldots, T-1$}{
            Update $r^m(t)$ according to Eq.~(\ref{optimal_unit_reward}) and $\rho_{i}^m(t)$\;
        }
        $\phi_{i}^{m,\texttt{est}}(t) =  \frac{1}{N}\sum\nolimits_{i=1}^{N} \sigma_{ij} \times \rho_{j}(t)$ \;
        $\phi_i(t) = \phi_{i}^{m,\texttt{est}}(t)$, $r(t) = r^{m}(t)$,$\rho_i(t)=\rho_{i}^m(t)$ \;
        $\epsilon = \phi_{i}^{m,\texttt{est}}(t) - \phi_{i}^{m-1,\texttt{est}}(t)$ \;
        $m = m + 1$\;
    }
}
\end{algorithm}

Eq.~(\ref{optimal_unit_reward}) characterizes a high-order, non-polynomial equation, rendering the closed-form solution intractable. Nevertheless, the optimal unit-reward $r^*(t)$ can be efficiently obtained by existing and widely adopted numerical optimization techniques, e.g., Gradient Descent or Quasi-Newton methods, ensuring convergence under strict convexity. Next, we establish the theoretical foundations for analyzing the Stackelberg Nash Equilibrium of our MPPFL framework.
\begin{definition}[Stackelberg Nash Equilibrium, SNE]
The optimal strategy profile $(r^{*}(t), \boldsymbol{\rho}^{*}(t))$ at iteration $t \!\in\! \{0, 1, \ldots, T\}$ constitutes a Stackelberg Nash Equilibrium if for any unit-reward $r(t) \in \mathbb{R}$ and any privacy budget $\rho_i(t) \in \mathbb{R}$ of any client $x_i$, $i \!\in\! \{1,2,\ldots,N\}$,
\begin{align}
U_{t}(r^{*}(t),\boldsymbol{\rho}^{*}(t)) &\leq U_{t}(r(t),\boldsymbol{\rho}^{*}(t)),  \\
U_{i}(\boldsymbol{\rho_{i}}^{*}(t), \boldsymbol{\rho_{\text{-}i}}^{*}(t), r^{*}(t)) &\geq U_{i}(\boldsymbol{\rho_{i}}(t), \boldsymbol{\rho_{\text{-}i}}^{*}(t), r^{*}(t)). 
\end{align}
\end{definition}
\begin{theorem}[SNE Existence]
The above two-stage Stackelberg game possesses a Stackelberg Nash Equilibrium.
\end{theorem}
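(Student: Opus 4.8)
The plan is to prove existence by backward induction, mirroring the two-stage structure of the game and reusing the closed-form characterizations already obtained in Theorem~\ref{optimal_rho} and Lemma~\ref{optimal_unit_reward_lemma}. Concretely, I would first show that every follower has a well-defined best response in Stage~II, then that the leader's reduced problem admits a minimizer in Stage~I; once both hold, the profile $(r^{*}(t),\boldsymbol{\rho}^{*}(t))$ assembled from these responses satisfies the leader and follower optimality inequalities in the SNE definition by construction.

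First I would handle Stage~II. For any announced unit-reward $r(t)$ and any fixed mean-field estimator $\phi_{i}(t)$, the refined utility $\hat{U}_{i}$ in Eq.~(\ref{client_utility_transform}) is a quadratic in $\rho_{i}(t)$ with leading coefficient $-a_{i}<0$, hence strictly concave on $\mathbb{R}$; this is exactly the concavity step invoked in the proof of Theorem~\ref{optimal_rho}. Strict concavity forces the stationary point to be the unique global maximizer, namely the closed form $\rho_{i}^{*}(t)$ in Eq.~(\ref{optimal_privacy_budget}), which directly yields the follower inequality $U_{i}(\rho_{i}^{*}(t),\boldsymbol{\rho}_{\text{-}i}^{*}(t),r^{*}(t))\geq U_{i}(\rho_{i}(t),\boldsymbol{\rho}_{\text{-}i}^{*}(t),r^{*}(t))$. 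Since these best responses are coupled only through $\phi_{i}(t)$, substituting Eq.~(\ref{optimal_privacy_budget}) into the definition of $\phi_{i}(t)$ collapses the coupling into the linear system $(\boldsymbol{I}+\alpha\boldsymbol{\Sigma})\boldsymbol{\rho}^{*}(t)=\boldsymbol{c}(t)$, where $\boldsymbol{\Sigma}=(\sigma_{ij})$, the $i$-th entry of $\boldsymbol{c}(t)$ is $\frac{r(t)-b_{i}}{2a_{i}}$, and the factor $N$ cancels. Invertibility of $\boldsymbol{I}+\alpha\boldsymbol{\Sigma}$, which follows from $\alpha\in(0,1)$ together with the bounded row sums of the discounted series $\boldsymbol{\Sigma}=\sum_{k=1}^{K}\lambda^{k-1}\boldsymbol{\widetilde{W}}^{k}$, then produces a consistent follower profile $\boldsymbol{\rho}^{*}(t)$ for every candidate reward.

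Next I would treat Stage~I. Substituting the followers' best responses $\rho_{i}^{*}(t)$, viewed as functions of $r(t)$, into the server cost $U_{t}$ of Eq.~(\ref{central_server_cost}) yields a reduced objective in $r(t)$ alone, defined on the feasible interval where all $\rho_{i}^{*}(t)>0$, i.e. $r(t)>r_{\min}(t):=\max_{i}\bigl(b_{i}+2a_{i}\alpha N\phi_{i}(t)\bigr)$. On this open interval the reduced cost is continuous and coercive at both ends: as $r(t)\downarrow r_{\min}(t)$ some $\rho_{i}^{*}(t)\to 0^{+}$ drives the accuracy-loss term $\frac{\epsilon_{i}}{t\rho_{i}^{*}(t)}\to+\infty$, while as $r(t)\to+\infty$ the payment term grows like $r(t)^{2}$. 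A continuous, coercive function on an open interval attains its infimum in the interior, so a minimizer $r^{*}(t)$ exists, and it is precisely a solution of the stationarity equation recorded in Lemma~\ref{optimal_unit_reward_lemma}, giving the leader inequality. Feeding $r^{*}(t)$ back through the Stage~II responses completes the SNE.

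I expect the main obstacle to be the Stage~I existence argument: as the paper already notes, the stationarity condition in Eq.~(\ref{optimal_unit_reward}) is a high-order, non-polynomial equation with no closed-form root, so existence cannot be read off algebraically and must instead be secured analytically through the continuity-plus-coercivity argument above, equivalently via an intermediate-value argument applied to the derivative of the reduced cost. A secondary technical point is verifying invertibility of $\boldsymbol{I}+\alpha\boldsymbol{\Sigma}$ uniformly over the admissible parameter range, so that the Stage~II equilibrium profile remains well defined for every reward considered by the leader; here I would bound $\|\alpha\boldsymbol{\Sigma}\|_{\infty}$ using row-stochasticity of $\boldsymbol{\widetilde{W}}$ and the geometric decay induced by $\lambda\in(0,1)$, and apply a Neumann-series (strict diagonal dominance) criterion.
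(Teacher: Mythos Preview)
Your proposal is correct and follows essentially the same backward-induction route as the paper: strict concavity of $\hat{U}_i$ for the followers (Theorem~\ref{optimal_rho}), then existence of the leader's minimizer via the behavior of the reduced cost at the boundary of the feasible reward interval. The only notable difference is packaging: the paper argues Stage~I by showing strict convexity of $U_t$ in $r(t)$ together with the derivative limits $\lim_{r(t)\to 0}\partial U_t/\partial r(t)\to-\infty$ and $\lim_{r(t)\to+\infty}\partial U_t/\partial r(t)\to+\infty$ (yielding a unique root of the first-order condition), whereas you phrase the same conclusion via coercivity of the reduced objective; and you additionally verify consistency of the follower profile through invertibility of $\boldsymbol{I}+\alpha\boldsymbol{\Sigma}$, a point the paper defers to its separate fixed-point theorem (Theorem~\ref{theorem_fixed_point}) and does not include in the SNE proof itself.
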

\begin{proof}
Based on Theorem \ref{optimal_rho}, we first derive the optimal privacy budget $\rho_{i}(t)$ for each client $x_i$ in response to the given unit-reward $r(t) \in \mathbb{R}$. Subsequently, we demonstrate that there exists an optimal unit-reward $r^{*}(t)$ for the central server’s cost function $U_{t}$ according to the optimal privacy budget $\rho_{i}^{*}(t)$ in Theorem \ref{optimal_rho}. As shown in Lemma \ref{optimal_unit_reward_lemma}, the cost function of the central server is strictly convex in $r(t)$ and the derivative satisfies $\lim_{r(t) \rightarrow 0} \frac{\partial U_{t}}{ \partial r(t)} \!\rightarrow\! - \infty$ and $\lim_{r(t) \rightarrow +\infty} \frac{\partial U_{t}}{ \partial r(t)} \!\rightarrow\! + \infty$, guaranteeing the unique minimum. Then, the optimal unit-reward $r^{*}(t)$ can be obtained by solving the first-order derivation $\frac{\partial U_{t}(r(t),\boldsymbol{\rho}^*(t))}{\partial r(t)} = 0$. Therefore, the strategies $r^{*}(t)$ and $\boldsymbol{\rho}^*(t)$ constitute a mutual optimal strategy between the central server and the clients at each iteration $t$ and the optimal strategy profile $(r^{*}(t), \boldsymbol{\rho}^{*}(t))$ forms the Stackelberg Nash Equilibrium. \end{proof}

\begin{figure*}
\setlength{\abovecaptionskip}{2pt} 
    \centering
    \begin{minipage}{160pt}
        \includegraphics[width=1.0\textwidth, trim=10 10 10 0,clip]{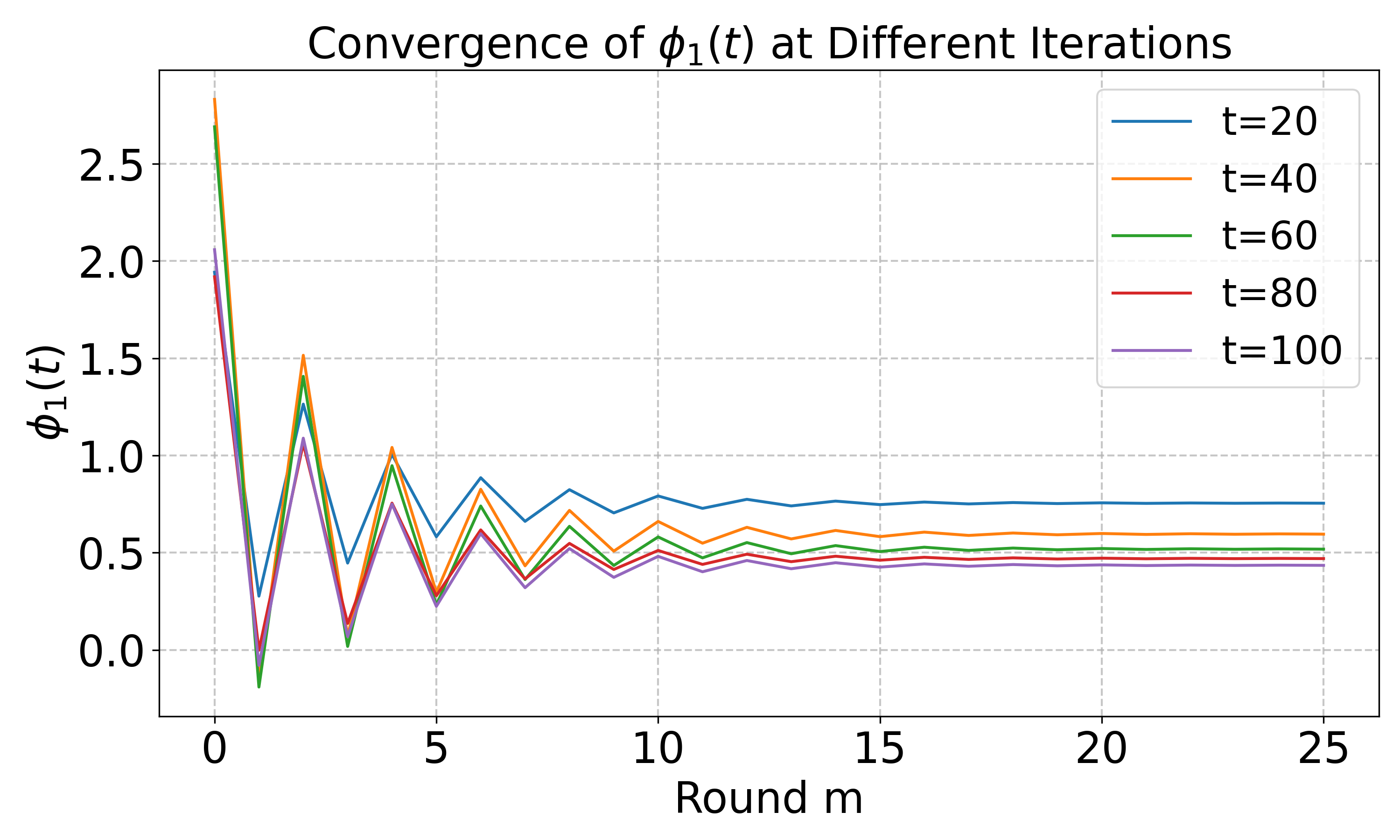}
        \vspace{-20pt}
        \caption{The convergence for the mean-field estimator $\phi_1(t)$ at different iteration $t$.}
        \label{phi_all_t_convergence}
    \end{minipage}
    \hspace{2pt}
    \begin{minipage}{160pt}
        \includegraphics[width=1.0\textwidth, trim=35 40 30 35,clip]{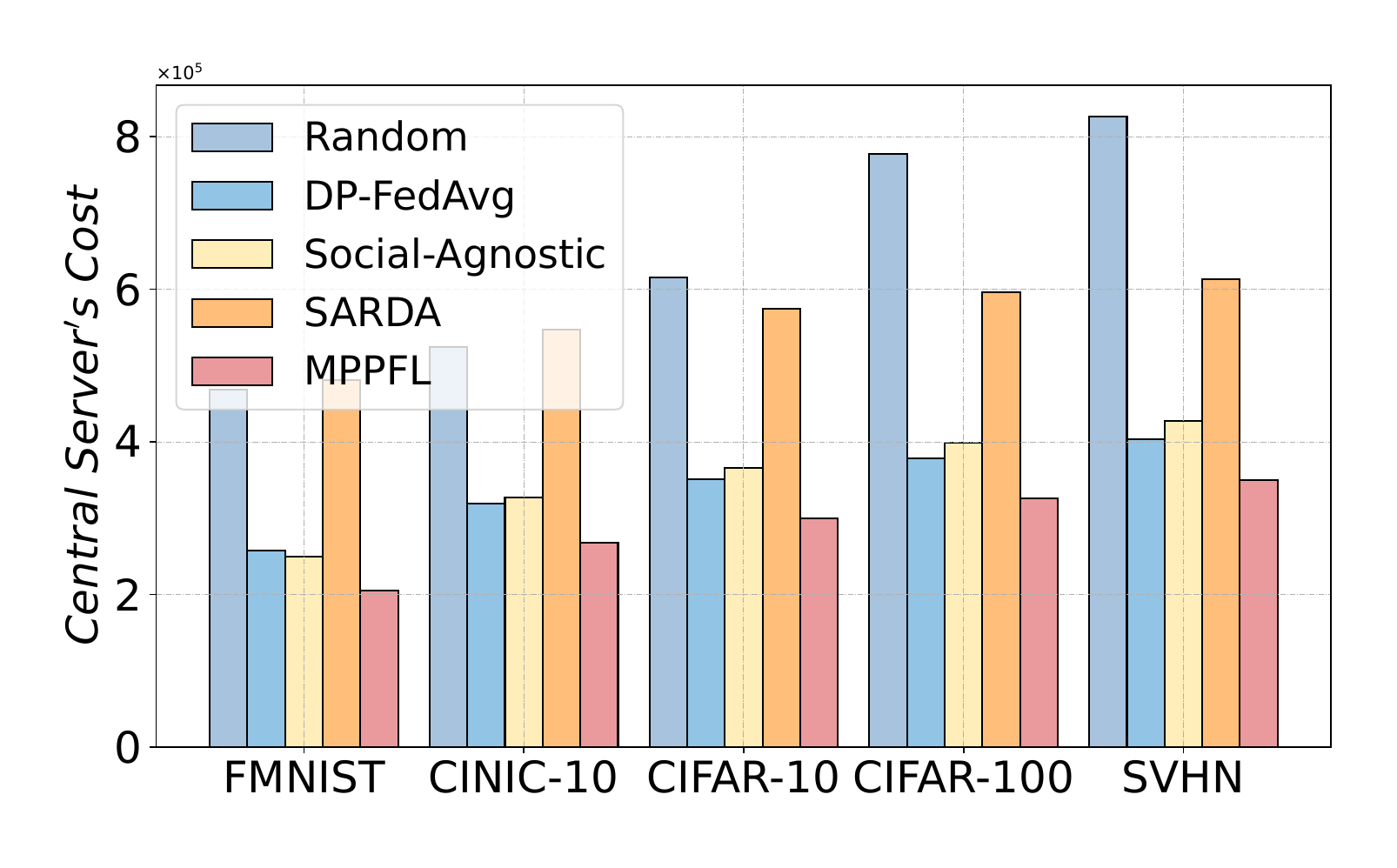}
        \vspace{-20pt}
        \caption{The central server’s cost comparison.}
        \label{scs_compare}
    \end{minipage}
    \hspace{2pt}
    \begin{minipage}{160pt}
        \includegraphics[width=1.0\textwidth, trim=35 40 30 35,clip]{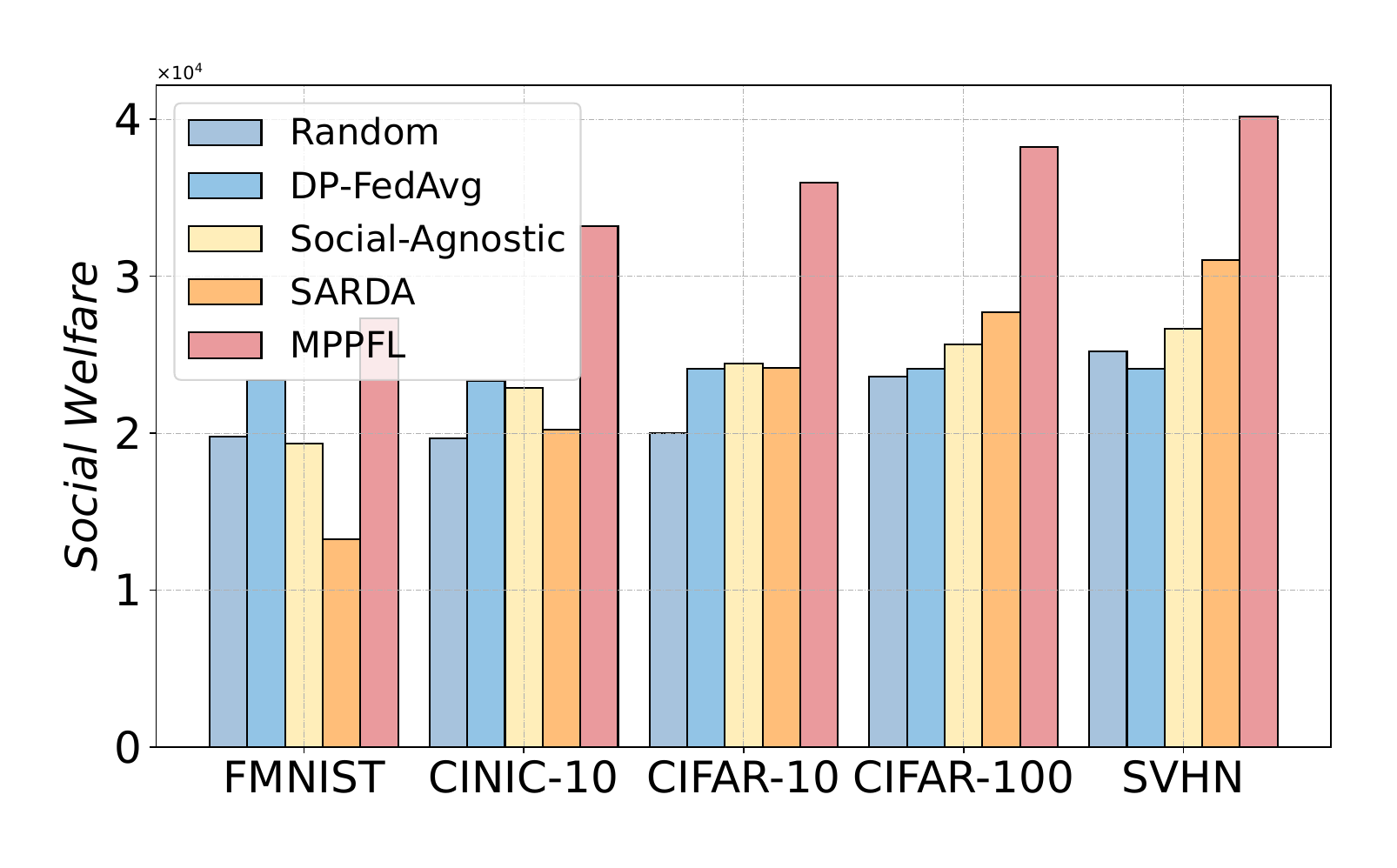}
        \vspace{-20pt}
        \caption{The social welfare comparison.}
        \label{sw_compare}
    \end{minipage}
\vspace{-10pt}
\end{figure*}

Then, we conduct rigorous analysis for the existence of the fixed point for the mean-field estimator $\phi_{i}(t)$, and develop an iterative algorithm to finalize the optimal strategy design.
\begin{theorem}[Fixed Point Existence] \label{theorem_fixed_point}
There exists a fixed point for the mean-field estimator $\phi_{i}(t)$, which can be attained through Alg.~\ref{alg_fix_point}.
\end{theorem}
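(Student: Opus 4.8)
The plan is to recognize that Algorithm~\ref{alg_fix_point} is a Picard iteration of a composite best-response map and to prove the theorem by the Banach fixed-point theorem. For a fixed global iteration $t$, I would define the self-map $\Psi_t:\mathbb{R}^N\to\mathbb{R}^N$ as follows: given a candidate profile $\boldsymbol{\phi}(t)=(\phi_1(t),\dots,\phi_N(t))$, first obtain the induced unit-reward $r^{*}(\boldsymbol{\phi})$ as the unique minimizer of the server cost $U_t$ (its existence and uniqueness were already established in the SNE-existence proof via strict convexity of $U_t$ in $r(t)$), then form the clients' best responses $\rho_j=\frac{r^{*}(\boldsymbol{\phi})-b_j}{2a_j}-\alpha N\phi_j$ from Theorem~\ref{optimal_rho}, and finally return $[\Psi_t(\boldsymbol{\phi})]_i=\frac{1}{N}\sum_{j=1}^N\sigma_{ij}\rho_j$. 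By Definition~\ref{mean_field}, a profile is a consistent mean-field estimate exactly when it is a fixed point $\boldsymbol{\phi}^{*}(t)=\Psi_t(\boldsymbol{\phi}^{*}(t))$, and the inner while-loop of Algorithm~\ref{alg_fix_point} computes precisely $\boldsymbol{\phi}^{(m+1)}=\Psi_t(\boldsymbol{\phi}^{(m)})$. Hence it suffices to show $\Psi_t$ is a contraction on the complete space $(\mathbb{R}^N,\|\cdot\|_\infty)$.

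First I would analyze the direct dependence. Substituting the best response and subtracting two iterates gives $[\Psi_t(\boldsymbol{\phi})-\Psi_t(\boldsymbol{\phi}')]_i=\frac{1}{N}\sum_j\sigma_{ij}\big[\frac{r^{*}(\boldsymbol{\phi})-r^{*}(\boldsymbol{\phi}')}{2a_j}-\alpha N(\phi_j-\phi'_j)\big]$. The $\phi$-linear part contributes the matrix action $-\alpha\boldsymbol{\Sigma}(\boldsymbol{\phi}-\boldsymbol{\phi}')$ with $\boldsymbol{\Sigma}=[\sigma_{ij}]$, whose $\infty$-norm is governed by the maximal row sum of $\boldsymbol{\Sigma}$. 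Using $\sigma_{ij}=\sum_{k=1}^K\lambda^{k-1}(\boldsymbol{\widetilde{W}}^k)_{ij}$, the row-stochasticity of $\boldsymbol{\widetilde{W}}$ (hence of each $\boldsymbol{\widetilde{W}}^k$), and $\sigma_{ii}=0$, I would bound $\max_i\sum_{j}\sigma_{ij}\le\sum_{k=1}^{K}\lambda^{k-1}=\frac{1-\lambda^K}{1-\lambda}$. Thus the direct term contracts with factor $\alpha\,\frac{1-\lambda^K}{1-\lambda}$, which is strictly below $1$ in the weak-externality regime $\alpha<\frac{1-\lambda}{1-\lambda^K}$ (and more generally whenever the spectral radius of $\alpha\boldsymbol{\Sigma}$ is below $1$, in which case $\boldsymbol{I}+\alpha\boldsymbol{\Sigma}$ is invertible and its inverse is the convergent Neumann series $\sum_{n\ge0}(-\alpha\boldsymbol{\Sigma})^n$, pinning down the unique fixed point in closed form).

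The remaining, and hardest, step is the indirect coupling through $r^{*}(\boldsymbol{\phi})$, whose closed form is intractable (cf.\ the remark after Lemma~\ref{optimal_unit_reward_lemma}). I would control it by implicit differentiation of the first-order condition $\partial U_t/\partial r=0$: since $U_t$ is strictly convex in $r$ with $\partial^2 U_t/\partial r^2$ bounded away from zero, the implicit function theorem yields that $r^{*}$ is Lipschitz in $\boldsymbol{\phi}$ with a constant $L_r$ expressible through the bounded mixed partials $\partial^2 U_t/\partial r\,\partial\phi_j$. Combining, the full Lipschitz constant of $\Psi_t$ is at most $\alpha\,\frac{1-\lambda^K}{1-\lambda}+\frac{L_r}{2N}\max_i\sum_j\frac{\sigma_{ij}}{a_j}$, and I would argue this stays below $1$ under the same small-$\alpha$ regime. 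With a contraction factor $L<1$ in hand, the Banach fixed-point theorem delivers existence and uniqueness of $\boldsymbol{\phi}^{*}(t)$ and guarantees that the Picard iterates $\boldsymbol{\phi}^{(m)}(t)$ produced by Algorithm~\ref{alg_fix_point} converge geometrically, $\|\boldsymbol{\phi}^{(m)}(t)-\boldsymbol{\phi}^{*}(t)\|_\infty\le L^m\|\boldsymbol{\phi}^{(0)}(t)-\boldsymbol{\phi}^{*}(t)\|_\infty$, so the stopping rule $\epsilon<\epsilon_0$ terminates and attains the fixed point. I expect the \textbf{principal obstacle} to be establishing a clean, verifiable bound on $L_r$ and thereby certifying that the combined constant is genuinely below one rather than merely finite.
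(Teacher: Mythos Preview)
Your overall strategy---cast the iteration as a Picard map and invoke Banach's contraction theorem under the condition $\alpha\,\frac{1-\lambda^K}{1-\lambda}<1$---matches the paper's. The paper, however, sets up the map in $\boldsymbol{\rho}$-space rather than $\boldsymbol{\phi}$-space: it substitutes Definition~\ref{mean_field} back into Eq.~(\ref{optimal_privacy_budget}) to obtain $\rho_i(t)=\frac{r(t)-b_i}{2a_i}-\alpha\sum_j\sigma_{ij}\rho_j(t)$, treats $r(t)$ as a \emph{fixed} parameter, and shows this affine self-map is a contraction on a compact box $[\rho_l,\rho_h]^{N(T+1)}$ with factor $\nu=\alpha^2S^2$ in the squared $\ell_2$ distance (equivalently Lipschitz constant $\alpha S$), where $S=\frac{1-\lambda^K}{1-\lambda}$. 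It also first applies Brouwer on the box before Banach, which is logically redundant once contraction is established. The main substantive divergence is that you fold the server's reaction $r^{*}(\boldsymbol{\phi})$ into the map and then confront what you correctly flag as the principal obstacle, a verifiable bound on its Lipschitz constant $L_r$. The paper simply does not face this: by holding $r(t)$ fixed it obtains the clean contraction factor $\alpha S$ with no $L_r$ term whatsoever. Your route is more faithful to the joint dynamics of Algorithm~\ref{alg_fix_point} (which does update $r$ each round), but that fidelity costs you the implicit-function-theorem estimate; the paper's route is simpler and yields an explicit constant, at the price of decoupling the reward update from the fixed-point analysis.
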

\begin{proof}[Proof Sketch]
Firstly, $\boldsymbol{\widetilde{W}}^k$ remains row-normalized for positive integer $k$ due to row-normalized matrix $\boldsymbol{\widetilde{W}}$, and the multi-hop leakage coefficient satisfies $\sigma_{ij} = \sum\nolimits_{k=1}^K \lambda^{k-1} (\boldsymbol{\widetilde{W}}^k)_{ij} \leq S := \frac{1 - \lambda^K}{1 - \lambda}$, revealing an exponential decay with the propagation length $K$. Next, by substituting the mean-field estimator in Definition~\ref{mean_field} into Eq.~(\ref{optimal_privacy_budget}), we construct a continuous mapping $\Theta$ regarding the privacy budget $\rho_{i}(t)$, $t \!\in\! \{0, 1, \ldots, T\}$, $i \!\in\! \{1, 2, \ldots, N\}$ on the continuous space $\Omega = [\rho_{l}, \rho_{h}]^{N(T+1)}$, where $\rho_{i}(t) \!\in\! [\rho_{l}, \rho_{h}]$, $\rho_{l} \!=\! \frac{m_l-\alpha S m_h}{1-\alpha^{2} S^{2}}$, $\rho_{h} \!=\! \frac{m_h-\alpha S m_l}{1-\alpha^{2} S^{2}}$, $m_l \!=\! \min_{i,t}\frac{r(t) - b_{i}}{2a_{i}}$, $m_h \!=\! \max_{i,t}\frac{r(t) - b_{i}}{2a_{i}}$ and $\alpha S \!<\! 1$. Thus, $\Theta$ is a continuous self-mapping on the compact convex space from $\Omega$ to $\Omega$ and by Brouwer’s Fixed-Point Theorem, $\Theta$ has a fixed point in $\Omega$. Then, to quantify the divergence between any two variables, we introduce a distance function $\Gamma$ defined by the $\ell_2$-norm and have $\Gamma(\boldsymbol{\rho}, \boldsymbol{\rho^{\prime}}) \leq \nu \| \boldsymbol{\rho} - \boldsymbol{\rho^{\prime}} \|_2^2$, where $\nu = \alpha^2 S^2$ satisfies $0 \leq \nu <1$, indicating that the mapping $\Theta$ is a contraction mapping on $\Omega$ according to Banach’s Fixed Point Theorem, which guarantees the \emph{existence} and \emph{uniqueness} of the fixed point in Alg.~\ref{alg_fix_point}.   \end{proof}

The iterative calculation of the fixed point of the mean-field estimator $\phi_{i}(t)$ in Alg.~\ref{alg_fix_point} effectively degrades the high computational complexity to the linear scale, denoted by $O(NTM)$, where $M$ represents the iterative round required to achieve $\epsilon_{0}$. The complete process of our proposed framework MPPFL is shown in Alg.~\ref{alg2}.

\begin{algorithm}[t]
\small
\caption{FL algorithm with MPPFL}
\label{alg2}
\SetAlgoLined
\KwIn{$N$, $T$, $\boldsymbol{\phi}(t)$, learning rate $\eta$, weight $\theta_{i}$ ,$\tau$, discount factor $\alpha$, where $t \!\in\! \{0, 1, \ldots, T\}$.}
\KwOut{The global model parameter $\boldsymbol{w(T)}$.}
\For{$t \leftarrow 0, 1, \ldots, T$}{
    The central server decides the optimal unit-reward $r^{*}(t)$ based on Eq.~(\ref{optimal_unit_reward}) \;
    Each client decides the optimal privacy budget $\rho_{i}^{*}(t)$ based on Eq.~(\ref{optimal_privacy_budget})\;
    \ForEach{client}{
        Local training: $\ \boldsymbol{w_{i}(t \!+\! 1)} \!=\! \boldsymbol{w(t)} \!-\! \eta \nabla F_{i}(\boldsymbol{w(t)})$ \;
        Gradient Perturbation: $\nabla \widetilde{F}_{i}(\boldsymbol{w(t)}) \!=\! \nabla F_{i}(\boldsymbol{w(t)}) \!+\! \boldsymbol{n_{i}(t)}$ \;
    }
    The central server updates the global model with: $w(t + 1) = w(t) - \eta \sum_{i=1}^{N} \! \theta_{i} \nabla \widetilde{F}_{i}(\boldsymbol{w(t)})$ 
}
\end{algorithm}

\section{Efficiency Analysis of Equilibrium Strategy}\label{s4}
The incorporation of external privacy risks is driven by individual concerns, and such local decisions can have profound implications on the collective coordination of all clients. Specifically, as each client internalizes its external impact by considering $\mathcal{R}_i(t)$ in their optimization objectives, the equilibrium may significantly differ from that obtained by clients merely considering their internal leakage $\rho_i(t)$, which naturally leads to the following question: \emph{How do such behavioral differences affect overall system efficiency?} To address this, we introduce the PoA metric, which quantifies the efficiency degradation incurred by decentralized decision-making relative to the socially optimal outcome.

Firstly, we begin by formulating the social welfare objective under the decentralized learning system, which runs for $T$ iterations and involves $N$ participating clients. Specifically, at $t$-th iteration, client $x_i$, $i \in \{1, \ldots, N\}$ designs the privacy budget $\rho_i(t)$, undertakes privacy costs and receives the corresponding monetary rewards. Thus, the cumulative social welfare across all clients and iterations is accordingly defined as follows:
\begin{align} \label{social_welfare_function}
\mathcal{W}(\boldsymbol{\rho}) = \sum\nolimits_{t=0}^{T} \sum\nolimits_{i=1}^{N} [ r(t)\rho_i(t) - ( a_i s_i^2(t) + b_i s_i(t) ) ].
\end{align}

Let $\boldsymbol{\rho}^{\text{SW}*}$ denote the strategy that maximizes the social welfare objective function $\mathcal{W}$. To quantify the inefficiency arising from self-interested behavior, we next introduce the definition of the Price of Anarchy (PoA)~\citep{papadimitriou2001algorithms}, which compares the worst-case Nash Equilibrium to the optimal social welfare.

\begin{definition}[Price of Anarchy, PoA] \label{definition_poa}
The PoA is defined as the ratio between the optimal social welfare and the minimum social welfare achieved  by the worst-case Stackelberg Nash Equilibrium, i.e.,
\begin{align} 
\text{PoA} = \frac{\mathcal{W}(\boldsymbol{\rho}^{\text{SW}*})}{\min_{\rho^*_i(t), i \in \{1,\ldots,N\}, t \in \{0,\ldots,T\}} \mathcal{W}(\boldsymbol{\rho}^*)},
\end{align}
where the value of PoA closer to 1 indicates that the Nash Equilibrium achieves approximately-optimal social efficiency. In contrast, a larger value of PoA implies a greater loss in efficiency due to uncoordinated, self-interested behaviors across clients.
\end{definition}






\paragraph{PoA Analysis under Social-Agnostic.}
We firstly analyze the efficiency of the Nash Equilibrium under social-agnostic, i.e., clients only consider internal privacy costs without the risk $\mathcal{R}_i(t)$. This scenario reflects a common decentralized behavior, where each client is unaware of or unwilling to internalize the indirect privacy risks caused by others. Denote the Nash Equilibrium strategy when clients solely optimize their internal privacy cost by $\boldsymbol{\rho}^{\text{SA}*}$. Thus, according to Definition~\ref{definition_poa}, the PoA under social-agnostic scenario is characterized by $\text{PoA}^{\text{(SA)}} = \frac{\mathcal{W}(\boldsymbol{\rho}^{\text{SW}*})}{\mathcal{W}(\boldsymbol{\rho}^{\text{SA}*})}$, which is strictly higher than 1. Then, we present the lower boundary of $\text{PoA}^{\text{(SA)}}$ in the following theorem.


\begin{theorem} \label{theorem_poa_SA}
Under any nontrivial influence, i.e., there exists at least one $\sigma_{ij} > 0$, the equilibrium is strictly suboptimal in social welfare and satisfies $\text{PoA}^{\text{(SA)}} > \frac{1}{1-\varepsilon^{2}}$, where $\varepsilon = \frac{\alpha S \tilde{\omega}_{\min}}{(1-\alpha^{2} S^{2})}(\alpha S - \frac{m_l}{m_h})$.
\end{theorem}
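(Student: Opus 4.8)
The plan is to exploit the quadratic structure of the social welfare objective $\mathcal{W}$ in Eq.~(\ref{social_welfare_function}). Since $s_i(t)=\rho_i(t)+\alpha\mathcal{R}_i(t)$ couples clients only within the same iteration $t$, the welfare decomposes as $\mathcal{W}=\sum\nolimits_{t=0}^{T}\mathcal{W}_t$, so it suffices to argue iteration-by-iteration. First I would pin down both strategies. The social-agnostic equilibrium $\boldsymbol{\rho}^{\text{SA}*}$ comes from each client optimizing only its internal term $r(t)\rho_i(t)-a_i\rho_i^2(t)-b_i\rho_i(t)$, giving $\rho_i^{\text{SA}*}(t)=\frac{r(t)-b_i}{2a_i}\in[m_l,m_h]$. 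The social optimum $\boldsymbol{\rho}^{\text{SW}*}$ is the unique maximizer of the strictly concave $\mathcal{W}_t$, characterized by its first-order condition, which—unlike the SA condition—carries the externality correction terms proportional to $\alpha\sigma_{ij}$.

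Next I would write the welfare loss as an exact quadratic form. Setting $\boldsymbol{s}(t)=(\boldsymbol{I}+\alpha\boldsymbol{\Sigma})\boldsymbol{\rho}(t)$ with $\boldsymbol{\Sigma}=(\sigma_{ij})$ and $\boldsymbol{A}=\mathrm{diag}(a_i)$, the Hessian of $\mathcal{W}_t$ is $-2(\boldsymbol{I}+\alpha\boldsymbol{\Sigma})^{\top}\boldsymbol{A}(\boldsymbol{I}+\alpha\boldsymbol{\Sigma})\prec 0$, so the second-order expansion about $\boldsymbol{\rho}^{\text{SW}*}(t)$ is exact and yields $\mathcal{W}_t(\boldsymbol{\rho}^{\text{SW}*})-\mathcal{W}_t(\boldsymbol{\rho}^{\text{SA}*})=\| \boldsymbol{\rho}^{\text{SA}*}(t)-\boldsymbol{\rho}^{\text{SW}*}(t) \|^2_{(\boldsymbol{I}+\alpha\boldsymbol{\Sigma})^{\top}\boldsymbol{A}(\boldsymbol{I}+\alpha\boldsymbol{\Sigma})}\ge 0$. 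Strict suboptimality then follows at once: under any nontrivial influence some $\sigma_{ij}>0$ forces $\boldsymbol{I}+\alpha\boldsymbol{\Sigma}\neq\boldsymbol{I}$, hence $\boldsymbol{\rho}^{\text{SA}*}\neq\boldsymbol{\rho}^{\text{SW}*}$ and the gap is strictly positive.

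For the quantitative bound I would use the elementary fact that for a concave quadratic $g(x)=cx-qx^2$ with positive maximum value at $x^*$, evaluating at a point of relative deviation $\gamma$, i.e. $x=(1+\gamma)x^*$, gives $g(x)/g(x^*)=1-\gamma^2$. Reducing the matrix quadratic form to this scalar ratio (coordinatewise, then aggregating) converts the target $\text{PoA}^{\text{(SA)}}>\frac{1}{1-\varepsilon^2}$ into a uniform lower bound $\gamma\ge\varepsilon$ on the relative deviation between $\boldsymbol{\rho}^{\text{SA}*}$ and $\boldsymbol{\rho}^{\text{SW}*}$. To recover exactly $\varepsilon$, I would lower-bound the externality that drives this deviation using $\sigma_{ij}\ge\tilde{\omega}_{\min}$ (from the $k=1$ term of $\sigma_{ij}=\sum\nolimits_{k=1}^{K}\lambda^{k-1}(\boldsymbol{\widetilde{W}}^k)_{ij}$) and control the remaining factors with $\sigma_{ij}\le S$, $\alpha S<1$, and $\rho_i^{\text{SA}*}\in[m_l,m_h]$; the combination $\frac{\alpha S\tilde{\omega}_{\min}}{1-\alpha^2 S^2}\bigl(\alpha S-\frac{m_l}{m_h}\bigr)$ then emerges as the worst-case relative shift, and squaring it inside the quadratic form produces $\varepsilon^2$.

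The main obstacle will be this last reduction: passing from the genuine multi-client matrix quadratic form to the clean scalar ratio $1-\varepsilon^2$. This requires (i) a per-client lower bound on the relative deviation that survives the coupling through $\boldsymbol{\Sigma}$—here $\tilde{\omega}_{\min}$ is essential to guarantee the externality cannot vanish on any connected pair—and (ii) eigenvalue control of $(\boldsymbol{I}+\alpha\boldsymbol{\Sigma})^{\top}\boldsymbol{A}(\boldsymbol{I}+\alpha\boldsymbol{\Sigma})$ and its inverse, so the ratio of the gap to the optimal welfare is not diluted. I would handle this by bounding the deviation against the extreme ratio $\frac{m_l}{m_h}$ and verifying $\varepsilon<1$ (so that $1-\varepsilon^2>0$) from $\alpha S<1$, which keeps the stated bound well-defined and strict.
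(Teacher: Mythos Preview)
The paper's own proof is a two-line sketch: derive $\rho_i^{\text{SA}*}(t)=\tfrac{r(t)-b_i}{2a_i}$, then ``calculate the optimal social welfare based on $\boldsymbol{\rho}^{\text{SW}*}$ and $\boldsymbol{\rho}^{\text{SA}*}$'' and take the ratio. Your structural setup---the per-iteration decomposition, the Hessian $-2(\boldsymbol{I}+\alpha\boldsymbol{\Sigma})^{\top}\boldsymbol{A}(\boldsymbol{I}+\alpha\boldsymbol{\Sigma})$, the exact quadratic-form expression for the welfare gap, and the strict-suboptimality argument---is correct and considerably more explicit than anything the paper writes down, so there is no conflict of approach at that level.

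The genuine gap is in your reduction step. The identity you quote does extend to the vector case, but only in the form $g(\boldsymbol{\rho})/g(\boldsymbol{\rho}^{\text{SW}*})=1-\gamma^2$ with $\gamma^2:=\|\boldsymbol{\rho}-\boldsymbol{\rho}^{\text{SW}*}\|_Q^2/\|\boldsymbol{\rho}^{\text{SW}*}\|_Q^2$; because $Q=(\boldsymbol{I}+\alpha\boldsymbol{\Sigma})^{\top}\boldsymbol{A}(\boldsymbol{I}+\alpha\boldsymbol{\Sigma})$ has off-diagonal coupling, $\mathcal{W}_t$ does \emph{not} split into independent scalar quadratics, so ``coordinatewise, then aggregating'' is not a valid route to a single relative-deviation parameter. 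Your fallback of eigenvalue control would produce a bound involving the conditioning of $Q$ and a raw norm ratio, not the stated $\varepsilon$. In fact the specific constants are tied to the interval $[\rho_l,\rho_h]$ from the fixed-point analysis in Theorem~\ref{theorem_fixed_point}: with $\rho_l=\tfrac{m_l-\alpha S m_h}{1-\alpha^2 S^2}$ one has $\varepsilon=-\alpha S\,\tilde{\omega}_{\min}\,\rho_l/m_h$, which is exactly why the factor $1/(1-\alpha^2 S^2)$ and the combination $\alpha S-m_l/m_h$ appear together. Recovering this expression requires the paper's route---substituting $\rho_i^{\text{SA}*}\in[m_l,m_h]$ and the socially-optimal budgets bounded via $[\rho_l,\rho_h]$ directly into $\mathcal{W}$ and bounding the resulting ratio---rather than an abstract spectral inequality. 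Your plan correctly flags this as the main obstacle but does not supply the bridge.
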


\begin{proof}[Proof Sketch]
Under social-agnostic scenarios, we obtain the privacy budget by $\rho_i^{\text{SA}*}(t) = \frac{r(t)-b_i}{2a_i}$, which neglects the privacy risk $\mathcal{R}_i(t)$ from other clients comparing to Eq.~(\ref{optimal_privacy_budget}) in Theorem~\ref{optimal_rho}. By calculating the optimal social welfare based on privacy budget vectors $\boldsymbol{\rho}^{\text{SW}*}$ and $\boldsymbol{\rho}^{\text{SA}*}$, we obtain the lower bound of $\text{PoA}^{\text{(SA)}}$ in Theorem~\ref{theorem_poa_SA}. \end{proof}

By Theorem~\ref{theorem_poa_SA}, the system under social-agnostic scenarios performs worse as client heterogeneity increases, reflected by a decreasing value of $\frac{m_l}{m_h}$, which serves as a proxy for the level of client heterogeneity. As $\frac{m_l}{m_h} \rightarrow 0$, $\text{PoA}^{\text{(SA)}}$ approaches infinity, and the efficiency of the FL system becomes arbitrarily bad. Thus, it is crucial to design an efficient mechanism to enhance social welfare.







\paragraph{PoA Analysis under External Privacy Awareness.} To further decrease clients' privacy loss in the model training, which can be degraded by the decisions of other clients through social networks, we adopt the multi-hop propagation model to quantify the external privacy of each client. We denote the Nash Equilibrium strategy under MPPFL framework by $\boldsymbol{\rho}^{\text{MPP}*}$ and characterize $\text{PoA}^{\text{(MPP)}} = \frac{\mathcal{W}(\boldsymbol{\rho}^{\text{SW}*})}{\mathcal{W}(\boldsymbol{\rho}^{\text{MPP}*})}$. Then, we prove that the multi-hop propagation model achieves convergence in terms of the PoA in the following Theorem.

\begin{theorem} \label{theorem_poa__mppfl}
Our proposed MPPFL framework results in $\text{PoA}^{(\text{MPP})} \\ \rightarrow 1$ with the threshold $\epsilon_{0} \rightarrow 0$.
\end{theorem}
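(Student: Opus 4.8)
The plan is to show that the welfare loss of the MPPFL equilibrium relative to the social optimum is governed entirely by the residual error of the fixed-point iteration, so that driving $\epsilon_{0}\to 0$ forces $\text{PoA}^{(\text{MPP})}\to 1$. Since $\mathcal{W}(\boldsymbol{\rho}^{\text{SW}*})$ is a fixed constant, by Definition~\ref{definition_poa} it suffices to prove $\mathcal{W}(\boldsymbol{\rho}^{\text{MPP}*})\to\mathcal{W}(\boldsymbol{\rho}^{\text{SW}*})$. I would split this into two ingredients: (i) the strategy actually returned by Alg.~\ref{alg_fix_point} converges to the exact Stackelberg Nash Equilibrium as $\epsilon_{0}\to 0$, and (ii) at that exact equilibrium the equilibrium strategy is a global maximizer of the (strictly concave) objective $\mathcal{W}$, so the two welfare values agree in the limit.

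For ingredient (i) I would quantify the algorithmic error. By Theorem~\ref{theorem_fixed_point}, the mapping $\Theta$ is a contraction on $\Omega$ with modulus $\nu=\alpha^{2}S^{2}<1$, so the iterates of Alg.~\ref{alg_fix_point} obey the standard a posteriori contraction bound $\|\hat{\phi}-\phi^{*}\|_{2}\le\frac{\nu}{1-\nu}\epsilon_{0}$ once the stopping test $\epsilon\le\epsilon_{0}$ triggers, where $\phi^{*}$ is the unique fixed point. Feeding this into the closed forms of Theorem~\ref{optimal_rho} and Lemma~\ref{optimal_unit_reward_lemma}, both Lipschitz in $\phi$ on the compact box $\Omega$ (the reward map being stable by the strict convexity and boundary limits established in the SNE existence proof), yields $\|\hat{\boldsymbol{\rho}}-\boldsymbol{\rho}^{\text{MPP}*}\|_{2}=O(\epsilon_{0})$ and $|\hat{r}(t)-r^{*}(t)|=O(\epsilon_{0})$. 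Because $\mathcal{W}$ in Eq.~(\ref{social_welfare_function}) is quadratic in $\boldsymbol{\rho}$ and bilinear in $(r,\boldsymbol{\rho})$ on a compact domain, it is Lipschitz there, so $|\mathcal{W}(\hat{\boldsymbol{\rho}})-\mathcal{W}(\boldsymbol{\rho}^{\text{MPP}*})|=O(\epsilon_{0})\to 0$.

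For ingredient (ii) I would characterize $\boldsymbol{\rho}^{\text{SW}*}$ through the first-order condition of $\mathcal{W}$. Writing $g_{k}(t)=2a_{k}s_{k}(t)+b_{k}$, stationarity of $\mathcal{W}$ reads $r(t)=g_{k}(t)+\alpha\sum_{i}g_{i}(t)\sigma_{ik}$, whereas the equilibrium condition of Theorem~\ref{optimal_rho} reads $r(t)=g_{k}(t)$ evaluated at the exact mean-field fixed point, where each client's inbound risk $\mathcal{R}_{i}(t)$ is already internalized through $s_{i}(t)$. I would then argue that the reciprocal-externality term $\alpha\sum_{i}g_{i}(t)\sigma_{ik}$ is precisely what the fixed-point update absorbs, so the two conditions coincide in the limit and the equilibrium is a stationary point of $\mathcal{W}$. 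Strict concavity of $\mathcal{W}$, verified by checking that its Hessian $-2(I+\alpha\boldsymbol{\sigma})^{\top}\mathrm{diag}(a_{i})(I+\alpha\boldsymbol{\sigma})$ is negative definite whenever $\alpha S<1$ (so $I+\alpha\boldsymbol{\sigma}$ is invertible), then promotes this stationary point to the unique global maximizer $\boldsymbol{\rho}^{\text{SW}*}$. Combining (i) and (ii) gives $\mathcal{W}(\boldsymbol{\rho}^{\text{MPP}*})\to\mathcal{W}(\boldsymbol{\rho}^{\text{SW}*})$ and hence $\text{PoA}^{(\text{MPP})}\to 1$.

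The hard part is exactly the reconciliation in ingredient (ii): the inbound coefficients $\sigma_{ij}$ that define $\mathcal{R}_{i}$ and the outbound coefficients $\sigma_{ik}$ that appear in the social first-order condition are distinct entries of the same matrix, so for asymmetric $\boldsymbol{\sigma}$ the equilibrium and social-optimum conditions need not match unless the residual externality is genuinely controlled. The delicate step is therefore to show this residual is subsumed into the $O(\epsilon_{0})$ error, for instance by bounding $\|(I+\alpha\boldsymbol{\sigma})^{-1}-I\|$ against the contraction gap and exploiting $\alpha S<1$, rather than letting it persist as an $\epsilon_{0}$-independent constant. A secondary difficulty is that $r(t)$ is recomputed jointly with $\boldsymbol{\rho}$ through the non-polynomial Eq.~(\ref{optimal_unit_reward}), so the Lipschitz propagation in step (i) must be carried out in the joint variable $(r,\boldsymbol{\rho})$ and must invoke the strict-convexity and boundary-limit conditions of the SNE existence proof to keep the reward map single-valued and stable.
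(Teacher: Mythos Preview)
Your route differs substantially from the paper's. The paper does not attempt anything structural: it simply substitutes the MPPFL strategy $\rho_i^*(t)=\frac{r(t)-b_i}{2a_i}-\alpha N\phi_i(t)$ from Theorem~\ref{optimal_rho} into $\mathcal{W}$ in Eq.~(\ref{social_welfare_function}), carries out the algebra, and arrives at a closed form
\[
\text{PoA}^{(\text{MPP})}=1+\frac{\alpha r(t)\bigl(\mathcal{R}_i(t)-N\phi_i(t)\bigr)}{\sum_{t}\sum_{i}\bigl[\tfrac{(r(t)-b_i)^2}{4a_i}-\alpha r(t)\mathcal{R}_i(t)\bigr]}.
\]
The only source of discrepancy is the estimator gap $\mathcal{R}_i(t)-N\phi_i(t)$, which by Definition~\ref{mean_field} vanishes at the exact fixed point and is driven to zero by Alg.~\ref{alg_fix_point} as $\epsilon_0\to 0$. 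No first-order-condition matching, no Hessian analysis, no Lipschitz propagation through the reward map.

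Your ingredient (ii), by contrast, has a genuine gap that you yourself flag but do not close. At the exact fixed point the individual best-response condition reads $r(t)=2a_k s_k(t)+b_k$ for every $k$, whereas the social-welfare first-order condition reads $r(t)=(2a_k s_k(t)+b_k)+\alpha\sum_i(2a_i s_i(t)+b_i)\sigma_{ik}$. Substituting the former into the latter forces $\alpha r(t)\sum_i\sigma_{ik}=0$, which fails whenever any column of $\boldsymbol{\sigma}$ is nonzero. So the exact MPPFL equilibrium is \emph{not} a stationary point of $\mathcal{W}$ for generic $\boldsymbol{\sigma}$, and the residual externality $\alpha\sum_i(2a_i s_i+b_i)\sigma_{ik}$ is an $\epsilon_0$-independent constant---it cannot be ``subsumed into the $O(\epsilon_0)$ error'' as you hope. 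Your proposed bound via $\|(I+\alpha\boldsymbol{\sigma})^{-1}-I\|$ controls the size of the perturbation, not its dependence on $\epsilon_0$, so it does not rescue the argument. The paper sidesteps this obstruction entirely by never comparing first-order conditions: its closed-form substitution leaves only the estimator gap as the discrepancy term, and that is what $\epsilon_0$ controls.
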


\begin{proof}[Proof Sketch]
Substituting the optimal privacy budget derived in Theorem~\ref{optimal_rho} into the cumulative social welfare function in Eq.~(\ref{social_welfare_function}), and based on Definition~\ref{definition_poa}, we can easily obtain $\text{PoA}^{\text{(MPP)}} = 1 + \frac{\alpha r(t) (\mathcal{R}_i(t) - N \phi_{i}(t))}{\sum\nolimits_{t=0}^{T} \sum\nolimits_{i=1}^{N} [\frac{(r(t)-b_{i})^2}{4a_i} - \alpha r(t) \mathcal{R}_i(t)]}$, where the value of $\text{PoA}^{\text{(MPP)}}$ is solely governed by the difference $\mathcal{R}_i(t) - N \phi_{i}(t)$. Then, according to Definition~\ref{mean_field}, the dependency between $\mathcal{R}_i(t)$ and $N \phi_{i}(t)$ is determined by the threshold $\epsilon_{0}$ in Alg.~\ref{alg_fix_point}. As the value of $\epsilon_{0}$ decrease to 0, $N \phi_{i}(t)$ approaches $\mathcal{R}_i(t)$, leading to $\text{PoA}^{\text{(MPP)}} \rightarrow 1$.
\end{proof}

From Theorem~\ref{theorem_poa__mppfl}, our proposed MPPFL framework incorporates both clients' own privacy and external privacy costs, which decreases the efficiency gap between Nash Equilibrium and social optimum, driving the decentralized Nash Equilibrium closer to the social optimum and PoA decreasing to 1. In contrast, under social-agnostic scenarios, clients’ selfish strategies diverge further from the social optimum, and PoA remains comparatively high. 

\begin{table}[t]
\renewcommand\arraystretch{1.0}
\caption{Accuracy Comparison on Different Datasets.}
\vspace{-25pt}
\begin{center}
\resizebox{0.49\textwidth}{!}{\LARGE \begin{tabular}
{cccccc} 
\toprule[1.5pt]
\textbf{Datasets \#}& \textbf{Data Partition} & \textbf{SA (w/o EPR)} & \textbf{DP-FedAvg}\citep{mcmahan2018learning} & \textbf{SARDA}\citep{sun2024socially} & \textbf{MPPFL}(Ours) \\ \cmidrule[1.2pt](l{1pt}r{0pt}){1-6}

\multirow{3}{*}{\multirowcell{1}{\textbf{FMNIST}}}
        & IID & 89.12 & 88.89 ($\downarrow$ 0.23) & 83.77 ($\downarrow$ 5.35) & 89.05 (\textbf{$\downarrow$ 0.07})  \\ 

        & Dir = 0.3 & 87.91 & 87.05 ($\downarrow$ 0.86) & 77.15 ($\downarrow$ 10.76) & 87.78 (\textbf{$\downarrow$ 0.13})  \\ 
        
        & Dir = 0.6 & 88.25 & 87.69 ($\downarrow$ 0.56) & 80.89 ($\downarrow$ 7.36) & 88.09 (\textbf{$\downarrow$ 0.16})  \\ \midrule[1.2pt]

\multirow{3}{*}{\multirowcell{1}{\textbf{CINIC-10}}}
        & IID & 48.74 & 48.36 (\textbf{$\downarrow$ 0.38})  & 46.13 ($\downarrow$ 2.61) & 48.23 ($\downarrow$ 0.51)  \\ 

        & Dir = 0.3 & 44.68 & 43.66 ($\downarrow$ 1.02) & 40.81 ($\downarrow$ 3.87) & 43.99 (\textbf{$\downarrow$ 0.69})  \\ 
        
        & Dir = 0.6 & 47.36 & 46.26 ($\downarrow$ 1.1) & 43.91 ($\downarrow$ 3.45) & 46.74 (\textbf{$\downarrow$ 0.62})  \\ \midrule[1.2pt]

\multirow{3}{*}{\multirowcell{1}{\textbf{CIFAR-10}}}
        & IID & 66.49 & 65.35 ($\downarrow$ 1.14) & 53.54 ($\downarrow$ 12.95) & 66.29 (\textbf{$\downarrow$ 0.2})  \\ 

        & Dir = 0.3 & 62.44 & 59.84 ($\downarrow$ 2.6) & 41.19 ($\downarrow$ 21.25) & 62.42 (\textbf{$\downarrow$ 0.02})  \\ 
        
        & Dir = 0.6 & 64.47 & 61.37 ($\downarrow$ 3.1) & 46.37 ($\downarrow$ 18.1) & 64.07 (\textbf{$\downarrow$ 0.4})  \\  \midrule[1.2pt]

\multirow{3}{*}{\multirowcell{1}{\textbf{CIFAR-100}}}
        & IID & 48.83 & 46.52 ($\downarrow$ 2.31) & 34.86 ($\downarrow$ 13.97) & 47.83 (\textbf{$\downarrow$ 1.0})  \\ 

        & Dir = 0.3 & 46.53 & 44.40 ($\downarrow$ 2.13) & 32.67 ($\downarrow$ 13.86) & 46.40 (\textbf{$\downarrow$ 0.13})  \\ 
        
        & Dir = 0.6 & 47.98 & 44.88 ($\downarrow$ 3.1) & 33.59 ($\downarrow$ 14.39) & 47.38 (\textbf{$\downarrow$ 0.6})  \\  \midrule[0.8pt]

\multirow{3}{*}{\multirowcell{1}{\textbf{SVHN}}}
        & IID & 87.55 & 87.43 ($\downarrow$ 0.12) & 83.57 ($\downarrow$ 3.98) & 87.51 (\textbf{$\downarrow$ 0.04}) \\ 

        & Dir = 0.3 & 85.89 & 85.78 (\textbf{$\downarrow$ 0.11}) & 81.91 ($\downarrow$ 3.98) & 85.59 ($\downarrow$ 0.3) \\ 
        
        & Dir = 0.6 & 87.23 & 85.89 ($\downarrow$ 1.34) & 83.71 ($\downarrow$ 3.52) & 86.07 (\textbf{$\downarrow$ 1.16})  \\  
         
\bottomrule[1.5pt]
\end{tabular}}
\vspace{-5pt}
\label{accuracy_compare}
\end{center}
\end{table}

\begin{table*}[t]
\renewcommand\arraystretch{1.1}
\caption{Accuracy Comparison on Different Datasets with Different N.}
\vspace{-25pt}
\begin{center}
{\setlength{\tabcolsep}{2pt}
\resizebox{1.0\textwidth}{!}{\begin{tabular}
{ccccc|ccc|ccc|ccc} 
\toprule[1.2pt]

\multirow{2}{*}{\multirowcell{2}{\centering\textbf{Datasets}}} & \multirow{2}{*}{\multirowcell{2}{\textbf{Data} \\ \textbf{Partition}}} & \multicolumn{3}{c}{\centering\textbf{SA (w/o EPR)}} & \multicolumn{3}{c}{\centering\textbf{DP-FedAvg}\citep{mcmahan2017communication}} & \multicolumn{3}{c}{\centering\textbf{SARDA}\citep{sun2024socially}} & \multicolumn{2}{c}{\centering\textbf{MPPFL}(Ours)}   \\ \cmidrule[0.5pt](l{1pt}r{0pt}){3-14}

&  & $N=20$ & $N=50$ & $N=80$ & $N=20$ & $N=50$ & $N=80$ & $N=20$ & $N=50$ & $N=80$ & $N=20$ & $N=50$ & $N=80$  \\ \cmidrule[0.8pt](l{1pt}r{0pt}){1-14}

\multirow{3}{*}{\multirowcell{1}{\textbf{CIFAR-10}}}
        & IID & 66.49 & 67.48 & 67.67 & 65.35 ($\downarrow$ 1.14)  & 65.69 ($\downarrow$ 1.79) & 66.05 ($\downarrow$ 1.62) & 53.54 ($\downarrow$ 12.95) & 35.12 ($\downarrow$ 29.29) & 40.86 ($\downarrow$ 25.07 & 65.96 ($\downarrow$ \textbf{0.2}) & 66.99 ($\downarrow$ \textbf{0.49}) & 67.39 ($\downarrow$ \textbf{0.28})  \\ 

        & Dir = 0.3 & 62.44 & 64.41 & 65.93 & 59.84 ($\downarrow$ 2.6)  & 62.92 ($\downarrow$ 1.49) & 64.01 ($\downarrow$ 1.92) & 41.19 ($\downarrow$ 21.25) & 21.62 ($\downarrow$ 42.79) & 27.39 ($\downarrow$ 38.54) & 62.42 ($\downarrow$ \textbf{0.02}) & 63.89 ($\downarrow$ \textbf{0.52}) & 65.64 ($\downarrow$ \textbf{0.29})  \\ 
        
        & Dir = 0.6 & 64.47 & 65.75 & 66.77 & 61.37 ($\downarrow$ 3.1)  & 63.88 ($\downarrow$ 1.87) & 65.56 ($\downarrow$ 1.21) & 46.37 ($\downarrow$ 18.1) & 30.43 ($\downarrow$ 35.32) & 33.83 ($\downarrow$ 32.94) & 64.07 ($\downarrow$ \textbf{0.4}) & 64.59 ($\downarrow$ \textbf{1.16}) & 66.04 ($\downarrow$ \textbf{0.73})  \\  \midrule[0.5pt]

\multirow{3}{*}{\multirowcell{1}{\textbf{CINIC-10}}}
        & IID & 48.74 & 49.93 & 50.37 & 48.36 ($\downarrow$ \textbf{0.38})  & 49.01 ($\downarrow$ 0.92) & 49.68 ($\downarrow$ 0.69) & 46.13 ($\downarrow$ 2.61) & 43.59 ($\downarrow$ 6.34) & 46.08 ($\downarrow$ 4.29) & 48.23 ($\downarrow$ 0.51) & 49.37 ($\downarrow$ \textbf{0.01}) & 49.28 ($\downarrow$ \textbf{0.41})  \\ 

        & Dir = 0.3 & 44.68 & 47.20 & 48.01 & 43.66 ($\downarrow$ 1.03)  & 46.41 ($\downarrow$ 0.79) & 47.55 ($\downarrow$ 1.36) & 40.81 ($\downarrow$ 3.87) & 41.27 ($\downarrow$ 5.93) & 41.11 ($\downarrow$ 6.9) & 43.99 ($\downarrow$ \textbf{0.69}) & 46.89 ($\downarrow$ \textbf{0.31}) & 47.32 ($\downarrow$ \textbf{0.69})  \\ 
        
        & Dir = 0.6 & 47.36 & 49.38 & 49.69 & 46.26 ($\downarrow$ 1.1)  & 48.02 ($\downarrow$ 1.36) & 49.54 ($\downarrow$ 0.15) & 43.91 ($\downarrow$ 3.45) & 42.10 ($\downarrow$ 7.28) & 42.64 ($\downarrow$ 7.05) & 46.74 ($\downarrow$ \textbf{0.62}) & 48.97 ($\downarrow$ \textbf{0.41}) & 49.66 ($\downarrow$ \textbf{0.03})  \\  
         
\bottomrule[1.2pt]
\end{tabular}}}
\vspace{-6pt}
\label{accuracy_N_compare}
\end{center}
\end{table*}

\begin{figure*}[t]
\setlength{\abovecaptionskip}{2pt}
\centering
    \begin{minipage}{122pt}
        \includegraphics[width=1.0\textwidth, trim=5 5 10 5,clip]{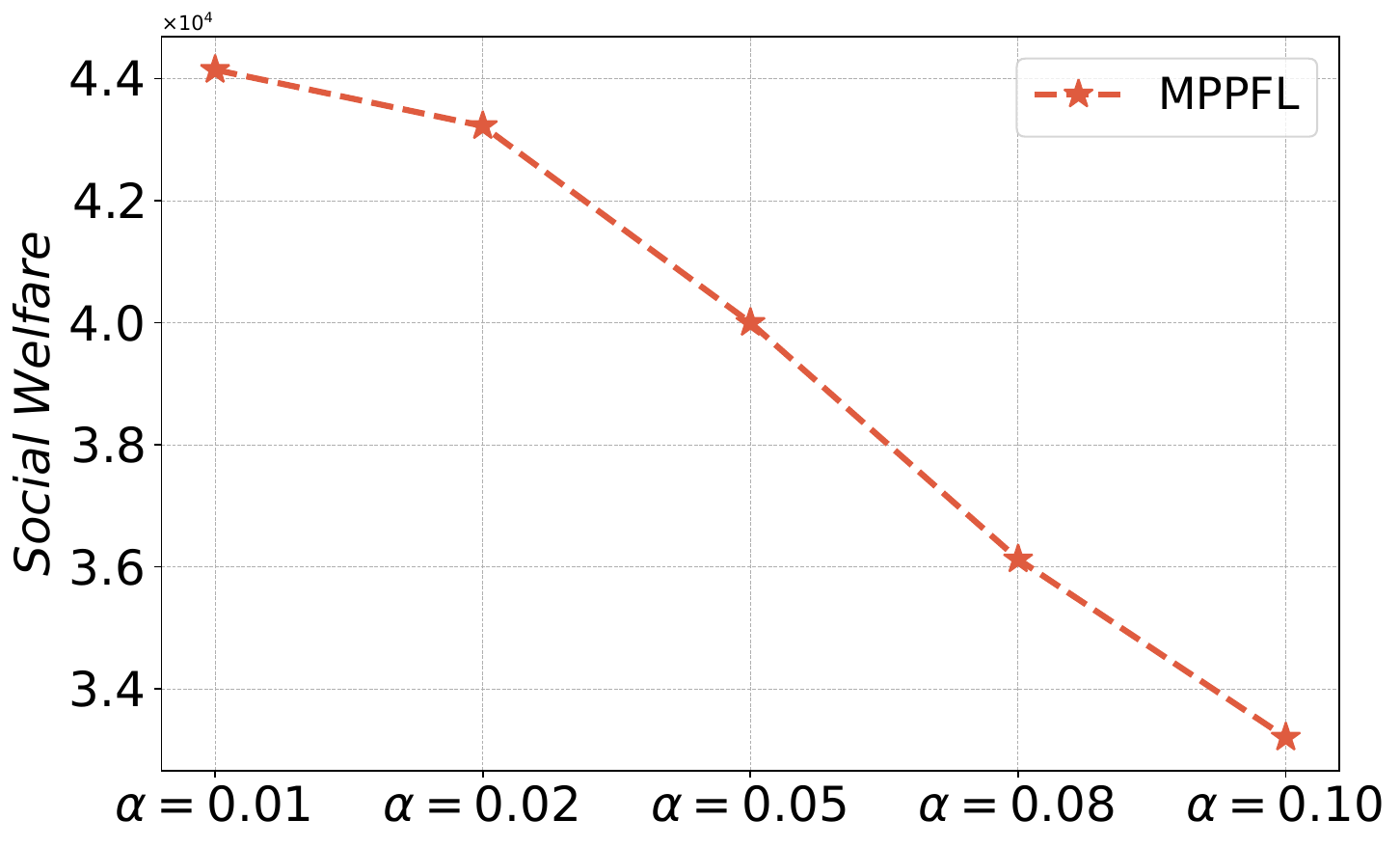}
        \vspace{-20pt}
        \caption{The social welfare comparison on CINIC-10 dataset with different $\alpha$.}
        \label{sw_compare_alpha}
    \end{minipage}
    \begin{minipage}{122pt}
        \includegraphics[width=1.0\textwidth, trim=5 10 8 5,clip]{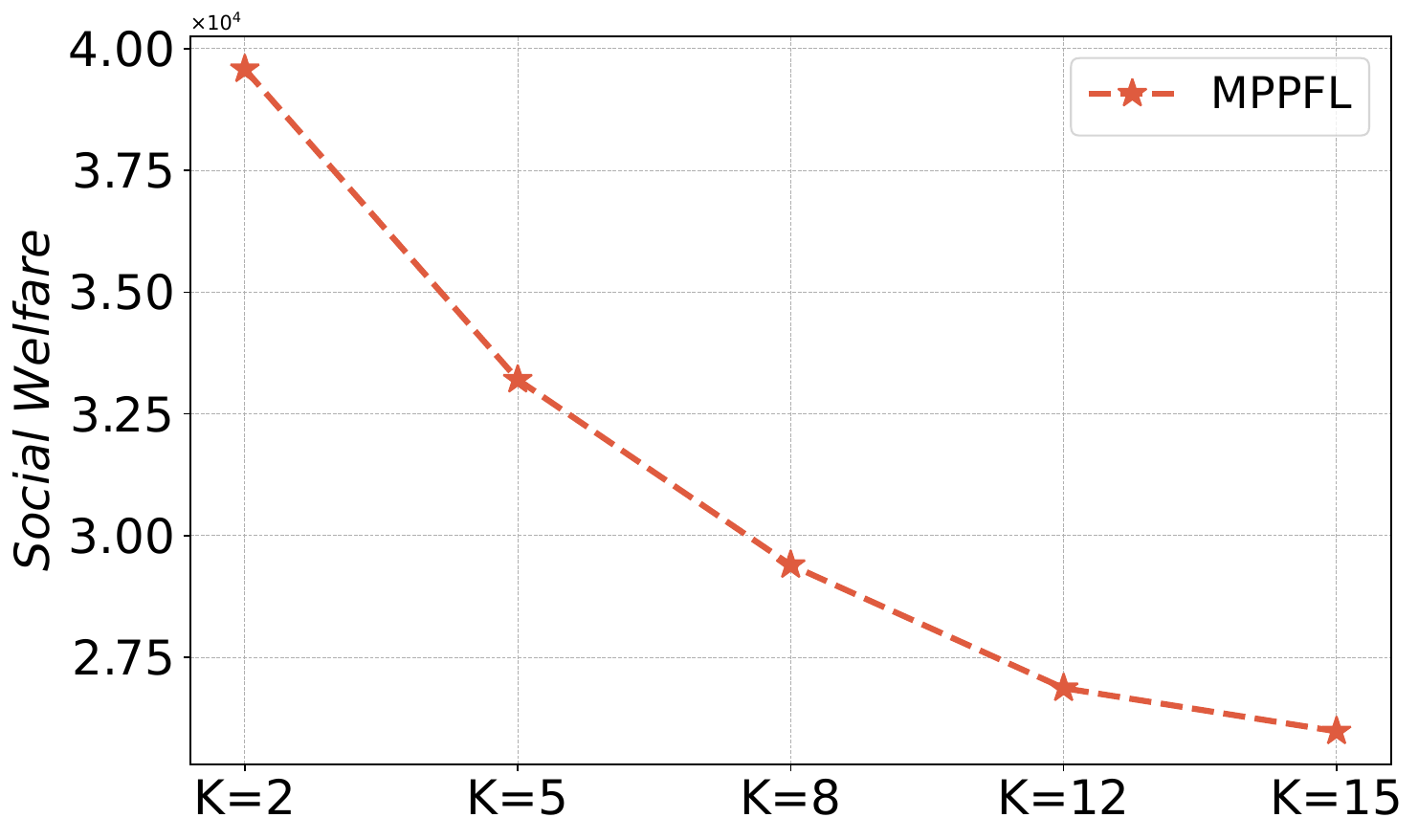}
        \vspace{-20pt}
        \caption{The social welfare comparison on CINIC-10 dataset with different $K$.}
        \label{sw_k}
    \end{minipage}
    \hspace{2pt}
    \begin{minipage}{122pt}
        \includegraphics[width=1.0\textwidth, trim=35 40 30 35,clip]{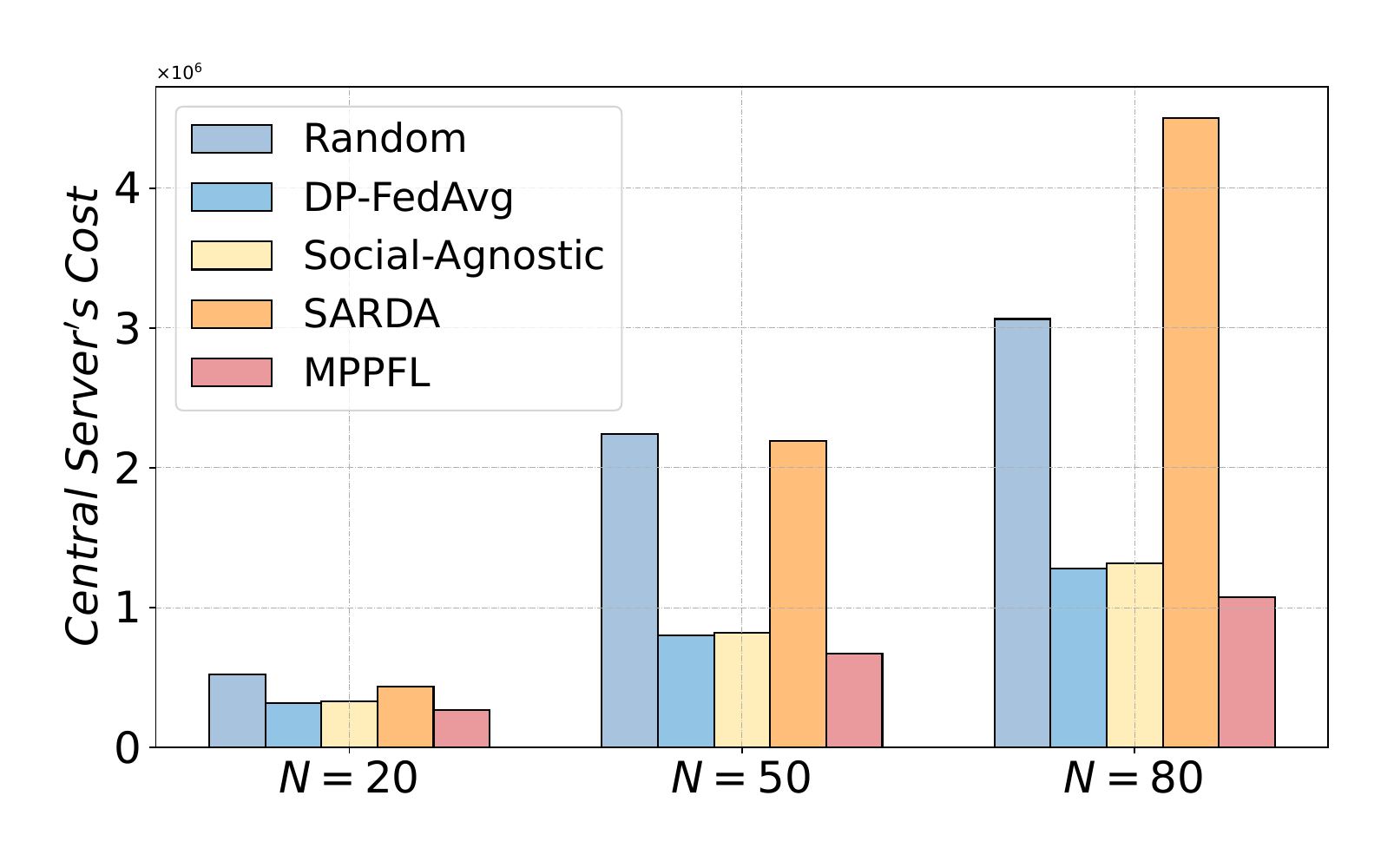}
        \vspace{-20pt}
        \caption{The central server’s cost comparison on CINIC-10 dataset with different $N$.}
        \label{csc_cinic}
    \end{minipage}
    \hspace{2pt}
    \begin{minipage}{122pt}
        \includegraphics[width=1.0\textwidth, trim=35 40 30 35,clip]{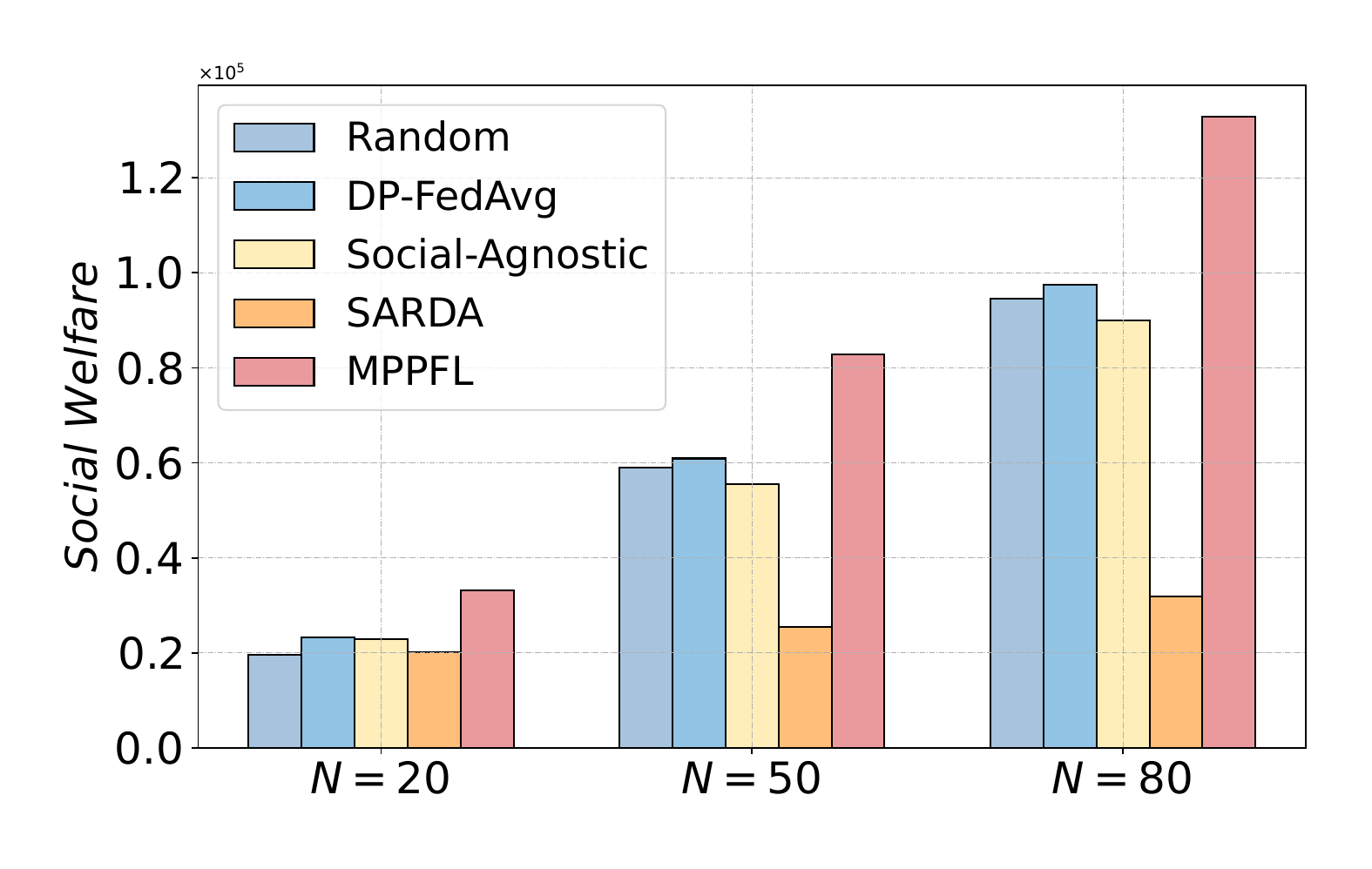}
        \vspace{-20pt}
        \caption{The social welfare comparison on CINIC-10 dataset with different $N$.}
        \label{sw_cinic}
    \end{minipage}
\vspace{-6pt}    
\end{figure*}

\section{Experiments}

\subsection{Experimental Setups}
\paragraph{Datasets and Hyperparameter settings.}
We conduct our experiments on five different real-world datasets, i.e., FMNIST~\citep{xiao2017fashion}, CINIC-10~\citep{darlow2018cinic}, CIFAR-10 \& CIFAR-100~\citep{krizhevsky2009learning}, and SVHN~\citep{netzer2011reading}, to validate the performance of our proposed MPPFL. We randomly distribute the training data to each participating client and implement different CNN models similar to the model architectures in~\citep{mcmahan2017communication} to train the datasets FMNIST and CIFAR-10 and utilize the VGG-11 and VGG-16 models for the SVHN dataset and CIFAR-100. We apply the SGD optimizer for local model training and default the learning rate $\eta = 0.01$, local training epoch $L = 5$ for all datasets. Specifically, we set the batchsize $B = 32$ for FMNIST and CIFAR-10 and $B = 256$ for SVHN. Regarding the parameters for our MPPFL system, we default the longest privacy propagation length $K=5$, the client number $N \in \{20, 50, 80\}$, and the threshold $\epsilon_{0} = 10^{\text{-}3}$ for obtaining the fixed point $\phi_i(t)$ in Alg.~\ref{alg_fix_point}. $\alpha \!\in\! [0.01, 0.1]$ to balance the magnitude difference between $\rho_i(t)$ and $\mathcal{R}_{i}(t)$ of $s_i(t)$ in Eq. (\ref{client_utility}), ensuring the effectiveness of the optimization problem. Dirichlet partitioning is employed to simulate non-independent and identically distributed (non-IID) data across clients, where local datasets are constructed by sampling label distributions from a Dirichlet distribution with concentration parameters~$\{0.3, 0.6\}$.

\paragraph{Baselines.}
To evaluate the effectiveness of our proposed MPPFL framework, we compare it with four baselines. Random strategy assigns each client’s privacy budget uniformly at random within a fixed range. DP-FedAvg \citep{mcmahan2018learning} adopts a fixed privacy budget across all clients and iterations, ignoring individual heterogeneity and social interactions. SA strategy assumes that each client optimizes its privacy budget solely based on its utility trade-off, without considering the external privacy risks (w/o EPR). SARDA \citep{sun2024socially} is a socially-aware iterative double-auction mechanism that incorporates inter-client privacy dependencies arising from data correlations in social networks.

\paragraph{ER model-based Social network.}
To model a heterogeneous probabilistic social network, we construct a weighted directed social network with $N$ clients based on the non-uniform extension of the Erdős–Rényi model, following \citep{lin2021friend}. Specifically, for each client pair $(i, j)$, the connection is formed with probability $p_{ij} \sim \mathbb{U}(0.1, 0.9)$. The edge weight $\omega_{ij}$, representing the strength of social relation, is sampled from a uniform distribution $\mathbb{U}(0.1, 1.0)$.

\subsection{Experimental results}
\paragraph{Utility Evaluation Analysis.}
Figure \ref{phi_all_t_convergence} shows that $\phi_1(t)$ converges rapidly within the first $15$ rounds under different global iterations, confirming the convergence and linear computation complexity of Alg.~\ref{alg_fix_point}. As illustrated in Figure \ref{scs_compare}, leveraging the optimal unit-reward $r^*(t)$ derived in Lemma \ref{optimal_unit_reward_lemma}, the MPPFL method consistently yields the lowest central server cost across all datasets, demonstrating its effectiveness in minimizing the central server's cost. To validate the theoretical derivations in Section \ref{s4}, Figure \ref{sw_compare} shows that MPPFL attains the highest social welfare among all baselines, with the performance gap widening on more complex datasets, which indicates the robustness of the MPPFL across diverse learning scenarios. 
Figure~\ref{sw_compare_alpha} and Figure~\ref{sw_k} illustrate the impact of external privacy risk sensitivity $\alpha$ and propagation length $K$ on social welfare. As shown in Figure~\ref{sw_compare_alpha}, increasing $\alpha$, which quantifies clients’ sensitivity to external privacy risks, leads to a gradual decline in social welfare under MPPFL, as clients become more conservative in their privacy budget decisions due to greater emphasis on external privacy risks from others. Similarly, Figure~\ref{sw_k} reveals that the social welfare of MPPFL decreases with longer propagation lengths $K$, due to the accumulation of multi-hop privacy leakage along extended social paths.
To assess the scalability of our framework, we conduct utility comparisons on the CINIC-10 dataset under varying numbers of clients. Figure~\ref{csc_cinic} shows that the central server cost grows with the number of clients across all strategies, while MPPFL consistently incurs the lowest cost regardless of the network size. Correspondingly, Figure~\ref{sw_cinic} indicates that MPPFL achieves the highest social welfare across all settings, with its performance gap widening as $N$ increases, highlighting the scalability of MPPFL in controlling the central server's cost and coordinating client behavior efficiency as the social network expands.

\paragraph{Model Training Performance Analysis.}
We take Social-Agnostic (SA) as the reference baseline, which represents clients' optimal decisions without accounting for external privacy risks through social ties, resulting in less noise being added. Although SA achieves relatively high model accuracy due to weaker privacy preservation, it incurs higher central server costs and yields lower social welfare, as shown in Figs.~\ref{scs_compare}-\ref{sw_compare}. As shown in Table~\ref{accuracy_compare}, MPPFL consistently achieves the closest accuracy to SA across datasets, with much smaller model performance degradation than other baselines under non-IID settings. Compared to SARDA, which also accounts for external privacy risks, MPPFL enforces stronger socially-aware privacy protection while maintaining competitive accuracy and lower central server costs. Table~\ref{accuracy_N_compare} further evaluates the training performance under varying numbers of clients. The results show that MPPFL maintains stable performance as the social network scales, with only marginal accuracy fluctuations, while both DP-FedAvg and SARDA experience larger instability or accuracy degradation. These observations highlight MPPFL’s superior ability to balance privacy preservation and model training performance under different social network scales. The additional experiment results are placed in Appendix E.

\section{Conclusion}
In this paper, we propose a novel privacy-preserving FL framework MPPFL aimed at achieving trade-offs between local privacy preservation and global training objectives while considering multi-hop privacy propagation over social networks. By modeling the server-client interaction as a Stackelberg game and incorporating a mean-field estimator to quantify external privacy risks, we prove the existence and convergence of the fixed point of the mean-field estimator and derive closed-form expressions of the optimal strategy profiles that constitute the Stackelberg Nash Equilibrium. Extensive experiments demonstrate that MPPFL consistently outperforms existing baselines by achieving lower central server costs and higher social welfare, while maintaining strong model accuracy across varying data heterogeneity and client scales.

\begin{ack}
This work was supported in part by the National Natural Science Foundation of China (NSFC) under Grant 62206320 and in part by Guangdong Basic and Applied Basic Research Foundation under Grant 2024A1515010118.
\end{ack}

\newpage
\bibliography{mybibfile}

\begin{thebibliography}{42}
\providecommand{\natexlab}[1]{#1}
\providecommand{\url}[1]{\texttt{#1}}
\expandafter\ifx\csname urlstyle\endcsname\relax
  \providecommand{\doi}[1]{doi: #1}\else
  \providecommand{\doi}{doi: \begingroup \urlstyle{rm}\Url}\fi

\bibitem[Acar et~al.(2021)Acar, Zhao, Navarro, Mattina, Whatmough, and Saligrama]{acar2021federated}
D.~A.~E. Acar, Y.~Zhao, R.~M. Navarro, M.~Mattina, P.~N. Whatmough, and V.~Saligrama.
\newblock Federated learning based on dynamic regularization.
\newblock \emph{arXiv preprint arXiv:2111.04263}, 2021.

\bibitem[Arevalo et~al.(2024)Arevalo, Noorbakhsh, Dong, Hong, and Wang]{arevalo2024task}
C.~A. Arevalo, S.~L. Noorbakhsh, Y.~Dong, Y.~Hong, and B.~Wang.
\newblock Task-agnostic privacy-preserving representation learning for federated learning against attribute inference attacks.
\newblock In \emph{Proceedings of the AAAI Conference on Artificial Intelligence}, volume~38, pages 10909--10917, 2024.

\bibitem[Bun and Steinke(2016)]{bun2016concentrated}
M.~Bun and T.~Steinke.
\newblock Concentrated differential privacy: Simplifications, extensions, and lower bounds.
\newblock In \emph{Theory of cryptography conference}, pages 635--658. Springer, 2016.

\bibitem[Chen et~al.(2024)Chen, Xu, Wu, Zhang, and Luo]{chen2024personalized}
Y.~Chen, W.~Xu, X.~Wu, M.~Zhang, and B.~Luo.
\newblock Personalized local differentially private federated learning with adaptive client sampling.
\newblock In \emph{ICASSP 2024-2024 IEEE International Conference on Acoustics, Speech and Signal Processing (ICASSP)}, pages 6600--6604. IEEE, 2024.

\bibitem[Darlow et~al.(2018)Darlow, Crowley, Antoniou, and Storkey]{darlow2018cinic}
L.~N. Darlow, E.~J. Crowley, A.~Antoniou, and A.~J. Storkey.
\newblock Cinic-10 is not imagenet or cifar-10.
\newblock \emph{arXiv preprint arXiv:1810.03505}, 2018.

\bibitem[Ding et~al.(2023)Ding, Sun, Wei, and Berry]{ding2023incentive}
N.~Ding, Z.~Sun, E.~Wei, and R.~Berry.
\newblock Incentive mechanism design for federated learning and unlearning.
\newblock In \emph{Proceedings of the Twenty-fourth International Symposium on Theory, Algorithmic Foundations, and Protocol Design for Mobile Networks and Mobile Computing}, pages 11--20, 2023.

\bibitem[Dwork et~al.(2014)Dwork, Roth, et~al.]{dwork2014algorithmic}
C.~Dwork, A.~Roth, et~al.
\newblock The algorithmic foundations of differential privacy.
\newblock \emph{Foundations and Trends{\textregistered} in Theoretical Computer Science}, 9\penalty0 (3--4):\penalty0 211--407, 2014.

\bibitem[Geiping et~al.(2020)Geiping, Bauermeister, Dr{\"o}ge, and Moeller]{geiping2020inverting}
J.~Geiping, H.~Bauermeister, H.~Dr{\"o}ge, and M.~Moeller.
\newblock Inverting gradients-how easy is it to break privacy in federated learning?
\newblock \emph{Advances in neural information processing systems}, 33:\penalty0 16937--16947, 2020.

\bibitem[Guo et~al.(2025)Guo, Zeng, Chen, Zhang, Ren, Zhou, and Qu]{guo2025new}
P.~Guo, S.~Zeng, W.~Chen, X.~Zhang, W.~Ren, Y.~Zhou, and L.~Qu.
\newblock A new federated learning framework against gradient inversion attacks.
\newblock In \emph{Proceedings of the AAAI Conference on Artificial Intelligence}, volume~39, pages 16969--16977, 2025.

\bibitem[He et~al.(2019)He, Tan, Tang, Qiu, and Liu]{he2019central}
C.~He, C.~Tan, H.~Tang, S.~Qiu, and J.~Liu.
\newblock Central server free federated learning over single-sided trust social networks.
\newblock \emph{arXiv preprint arXiv:1910.04956}, 2019.

\bibitem[Hu et~al.(2021)Hu, Salcic, Sun, Dobbie, and Zhang]{hu2021source}
H.~Hu, Z.~Salcic, L.~Sun, G.~Dobbie, and X.~Zhang.
\newblock Source inference attacks in federated learning.
\newblock In \emph{2021 IEEE International Conference on Data Mining (ICDM)}, pages 1102--1107. IEEE, 2021.

\bibitem[Hu et~al.(2024)Hu, Cao, Hu, and Zhang]{hu2024federated}
H.~X. Hu, C.~Cao, Q.~Hu, and Y.~Zhang.
\newblock Federated learning enabled graph convolutional autoencoder and factorization machine for potential friendship prediction in social networks.
\newblock \emph{Information Fusion}, 102:\penalty0 102042, 2024.

\bibitem[Karimireddy et~al.(2020)Karimireddy, Kale, Mohri, Reddi, Stich, and Suresh]{karimireddy2020scaffold}
S.~P. Karimireddy, S.~Kale, M.~Mohri, S.~Reddi, S.~Stich, and A.~T. Suresh.
\newblock Scaffold: Stochastic controlled averaging for federated learning.
\newblock In \emph{International conference on machine learning}, pages 5132--5143. PMLR, 2020.

\bibitem[Khan et~al.(2021)Khan, Han, Niyato, and Hong]{khan2021socially}
L.~U. Khan, Z.~Han, D.~Niyato, and C.~S. Hong.
\newblock Socially-aware-clustering-enabled federated learning for edge networks.
\newblock \emph{IEEE Transactions on Network and Service Management}, 18\penalty0 (3):\penalty0 2641--2658, 2021.

\bibitem[Krizhevsky et~al.(2009)Krizhevsky, Hinton, et~al.]{krizhevsky2009learning}
A.~Krizhevsky, G.~Hinton, et~al.
\newblock Learning multiple layers of features from tiny images, 2009.

\bibitem[Li et~al.(2020)Li, Sahu, Zaheer, Sanjabi, Talwalkar, and Smith]{li2020federated}
T.~Li, A.~K. Sahu, M.~Zaheer, M.~Sanjabi, A.~Talwalkar, and V.~Smith.
\newblock Federated optimization in heterogeneous networks.
\newblock \emph{Proceedings of Machine learning and systems}, 2:\penalty0 429--450, 2020.

\bibitem[Lin et~al.(2021)Lin, Wu, Li, Zheng, and Li]{lin2021friend}
X.~Lin, J.~Wu, J.~Li, X.~Zheng, and G.~Li.
\newblock Friend-as-learner: Socially-driven trustworthy and efficient wireless federated edge learning.
\newblock \emph{IEEE Transactions on Mobile Computing}, 22\penalty0 (1):\penalty0 269--283, 2021.

\bibitem[Liu et~al.(2014)Liu, Yang, Wang, Lin, and Wittie]{liu2014assessment}
G.~Liu, Q.~Yang, H.~Wang, X.~Lin, and M.~P. Wittie.
\newblock Assessment of multi-hop interpersonal trust in social networks by three-valued subjective logic.
\newblock In \emph{IEEE INFOCOM 2014-IEEE Conference on Computer Communications}, pages 1698--1706. IEEE, 2014.

\bibitem[Mao et~al.(2024)Mao, Ma, Liao, and Chen]{mao2024game}
W.~Mao, Q.~Ma, G.~Liao, and X.~Chen.
\newblock Game analysis and incentive mechanism design for differentially private cross-silo federated learning.
\newblock \emph{IEEE Transactions on Mobile Computing}, 23\penalty0 (10):\penalty0 9337--9351, 2024.

\bibitem[McMahan et~al.(2017)McMahan, Moore, Ramage, Hampson, and y~Arcas]{mcmahan2017communication}
B.~McMahan, E.~Moore, D.~Ramage, S.~Hampson, and B.~A. y~Arcas.
\newblock Communication-efficient learning of deep networks from decentralized data.
\newblock In \emph{Artificial intelligence and statistics}, pages 1273--1282. PMLR, 2017.

\bibitem[McMahan et~al.(2018)McMahan, Ramage, Talwar, and Zhang]{mcmahan2018learning}
H.~B. McMahan, D.~Ramage, K.~Talwar, and L.~Zhang.
\newblock Learning differentially private recurrent language models.
\newblock In \emph{International Conference on Learning Representations}, 2018.

\bibitem[Netzer et~al.(2011)Netzer, Wang, Coates, Bissacco, Wu, Ng, et~al.]{netzer2011reading}
Y.~Netzer, T.~Wang, A.~Coates, A.~Bissacco, B.~Wu, A.~Y. Ng, et~al.
\newblock Reading digits in natural images with unsupervised feature learning.
\newblock In \emph{NIPS workshop on deep learning and unsupervised feature learning}, volume 2011, page~4. Granada, 2011.

\bibitem[Ng et~al.(2020)Ng, Lim, Dai, Xiong, Huang, Niyato, Hua, Leung, and Miao]{ng2020joint}
J.~S. Ng, W.~Y.~B. Lim, H.-N. Dai, Z.~Xiong, J.~Huang, D.~Niyato, X.-S. Hua, C.~Leung, and C.~Miao.
\newblock Joint auction-coalition formation framework for communication-efficient federated learning in uav-enabled internet of vehicles.
\newblock \emph{IEEE Transactions on Intelligent Transportation Systems}, 22\penalty0 (4):\penalty0 2326--2344, 2020.

\bibitem[Nguyen et~al.(2017)Nguyen, Nguyen, Do, and Yoo]{nguyen2017probability}
D.-L. Nguyen, T.-H. Nguyen, T.-H. Do, and M.~Yoo.
\newblock Probability-based multi-hop diffusion method for influence maximization in social networks.
\newblock \emph{Wireless Personal Communications}, 93\penalty0 (4):\penalty0 903--916, 2017.

\bibitem[Papadimitriou(2001)]{papadimitriou2001algorithms}
C.~Papadimitriou.
\newblock Algorithms, games, and the internet.
\newblock In \emph{Proceedings of the thirty-third annual ACM symposium on Theory of computing}, pages 749--753, 2001.

\bibitem[Rakhlin et~al.(2011)Rakhlin, Shamir, and Sridharan]{rakhlin2011making}
A.~Rakhlin, O.~Shamir, and K.~Sridharan.
\newblock Making gradient descent optimal for strongly convex stochastic optimization.
\newblock \emph{arXiv preprint arXiv:1109.5647}, 2011.

\bibitem[Ren et~al.(2023)Ren, Liao, Ma, and Chen]{ren2023differentially}
K.~Ren, G.~Liao, Q.~Ma, and X.~Chen.
\newblock Differentially private auction design for federated learning with non-iid data.
\newblock \emph{IEEE Transactions on Services Computing}, 17\penalty0 (5):\penalty0 2236--2247, 2023.

\bibitem[Sun et~al.(2024)Sun, Liao, Chen, and Huang]{sun2024socially}
P.~Sun, G.~Liao, X.~Chen, and J.~Huang.
\newblock A socially optimal data marketplace with differentially private federated learning.
\newblock \emph{IEEE/ACM Transactions on Networking}, 32\penalty0 (3):\penalty0 2221--2236, 2024.

\bibitem[Truex et~al.(2020)Truex, Liu, Chow, Gursoy, and Wei]{truex2020ldp}
S.~Truex, L.~Liu, K.-H. Chow, M.~E. Gursoy, and W.~Wei.
\newblock Ldp-fed: Federated learning with local differential privacy.
\newblock In \emph{Proceedings of the third ACM international workshop on edge systems, analytics and networking}, pages 61--66, 2020.

\bibitem[Wang et~al.(2023)Wang, Su, Pan, Benslimane, Liu, Luan, and Li]{wang2023trade}
Y.~Wang, Z.~Su, Y.~Pan, A.~Benslimane, Y.~Liu, T.~H. Luan, and R.~Li.
\newblock Trade privacy for utility: A learning-based privacy pricing game in federated learning.
\newblock In \emph{ICC 2023-IEEE International Conference on Communications}, pages 6307--6311. IEEE, 2023.

\bibitem[Wei et~al.(2021)Wei, Li, Ding, Ma, Su, Zhang, and Poor]{wei2021user}
K.~Wei, J.~Li, M.~Ding, C.~Ma, H.~Su, B.~Zhang, and H.~V. Poor.
\newblock User-level privacy-preserving federated learning: Analysis and performance optimization.
\newblock \emph{IEEE Transactions on Mobile Computing}, 21\penalty0 (9):\penalty0 3388--3401, 2021.

\bibitem[Wu et~al.(2021)Wu, Ye, Ding, Guo, Yu, and Pan]{wu2021incentivizing}
M.~Wu, D.~Ye, J.~Ding, Y.~Guo, R.~Yu, and M.~Pan.
\newblock Incentivizing differentially private federated learning: A multidimensional contract approach.
\newblock \emph{IEEE Internet of Things Journal}, 8\penalty0 (13):\penalty0 10639--10651, 2021.

\bibitem[Xiao et~al.(2017)Xiao, Rasul, and Vollgraf]{xiao2017fashion}
H.~Xiao, K.~Rasul, and R.~Vollgraf.
\newblock Fashion-mnist: a novel image dataset for benchmarking machine learning algorithms.
\newblock \emph{arXiv preprint arXiv:1708.07747}, 2017.

\bibitem[Xu et~al.(2021)Xu, Xiao, Tan, Liu, Gao, and Yan]{xu2021incentive}
Y.~Xu, M.~Xiao, H.~Tan, A.~Liu, G.~Gao, and Z.~Yan.
\newblock Incentive mechanism for differentially private federated learning in industrial internet of things.
\newblock \emph{IEEE Transactions on Industrial Informatics}, 18\penalty0 (10):\penalty0 6927--6939, 2021.

\bibitem[Xu et~al.(2023)Xu, Xiao, Zhu, Wu, Zhang, and Zhou]{xu2023aoi}
Y.~Xu, M.~Xiao, Y.~Zhu, J.~Wu, S.~Zhang, and J.~Zhou.
\newblock Aoi-guaranteed incentive mechanism for mobile crowdsensing with freshness concerns.
\newblock \emph{IEEE Transactions on Mobile Computing}, 23\penalty0 (5):\penalty0 4107--4125, 2023.

\bibitem[Xu et~al.(2024)Xu, Yin, Fang, and Gong]{xu2024robust}
Y.~Xu, M.~Yin, M.~Fang, and N.~Z. Gong.
\newblock Robust federated learning mitigates client-side training data distribution inference attacks.
\newblock In \emph{Companion Proceedings of the ACM Web Conference 2024}, pages 798--801, 2024.

\bibitem[Yang et~al.(2019)Yang, Shi, He, and Zhang]{yang2019socially}
G.~Yang, Z.~Shi, S.~He, and J.~Zhang.
\newblock Socially privacy-preserving data collection for crowdsensing.
\newblock \emph{IEEE Transactions on Vehicular Technology}, 69\penalty0 (1):\penalty0 851--861, 2019.

\bibitem[Yu et~al.(2024)Yu, Zhang, Tao, Xu, Zou, and Cheng]{yu2024correlation}
D.~Yu, K.~Zhang, Y.~Tao, W.~Xu, Y.~Zou, and X.~Cheng.
\newblock Correlation-aware and personalized privacy-preserving data collection.
\newblock In \emph{2024 International Conference on Computing, Networking and Communications (ICNC)}, pages 724--729. IEEE, 2024.

\bibitem[Yu et~al.(2020)Yu, Liu, Liu, Chen, Cong, Weng, Niyato, and Yang]{yu2020fairness}
H.~Yu, Z.~Liu, Y.~Liu, T.~Chen, M.~Cong, X.~Weng, D.~Niyato, and Q.~Yang.
\newblock A fairness-aware incentive scheme for federated learning.
\newblock In \emph{Proceedings of the AAAI/ACM Conference on AI, Ethics, and Society}, pages 393--399, 2020.

\bibitem[Yuan and Wang(2025)]{yuan2025game}
W.~Yuan and X.~Wang.
\newblock A game-theoretic framework for privacy-aware client sampling in federated learning.
\newblock \emph{IEEE Transactions on Networking}, 2025.

\bibitem[Zhan and Zhang(2020)]{zhan2020incentive}
Y.~Zhan and J.~Zhang.
\newblock An incentive mechanism design for efficient edge learning by deep reinforcement learning approach.
\newblock In \emph{IEEE INFOCOM 2020-IEEE conference on computer communications}, pages 2489--2498. IEEE, 2020.

\bibitem[Zhan et~al.(2019)Zhan, Liu, Zhao, Zhang, and Tang]{zhan2019free}
Y.~Zhan, C.~H. Liu, Y.~Zhao, J.~Zhang, and J.~Tang.
\newblock Free market of multi-leader multi-follower mobile crowdsensing: An incentive mechanism design by deep reinforcement learning.
\newblock \emph{IEEE Transactions on Mobile Computing}, 19\penalty0 (10):\penalty0 2316--2329, 2019.

\end{thebibliography}

\newpage

\setcounter{equation}{12}

\appendix
\section{Proof of Proposition \ref{proposition_varience}}
\begin{proof}
Consider client $x_i$ with two adjacent datasets $\mathcal{D}_i$ and $\mathcal{D}_i'$ that differ by a single data point. Assume that the local model update $\boldsymbol{w}_i(t)$ at the $t$-th global iteration is subject to $\ell_2$-norm clipping with threshold $\mathcal{S}$, i.e., $\|\boldsymbol{w}_i(t)\|_2 \leq \mathcal{S}$, to ensure bounded sensitivity. Thus, the $\ell_2$-sensitivity of the query function $Q$, which maps the local dataset to the average model update, is upper bounded by:
\begin{align} \label{sensitivity}
\Delta Q 
&= \max_{\mathcal{D}_i, \mathcal{D}_i^{\prime}} \| Q(\mathcal{D}_i) - Q(\mathcal{D}_i') \|_2 \nonumber \\
&= \max_{\mathcal{D}_i, \mathcal{D}_i^{\prime}} 
\| \frac{1}{|\mathcal{D}_i|} \sum\nolimits_{j \in \mathcal{D}_i} \!\!\! \nabla f_i(\boldsymbol{w}, j) 
\!-\! \frac{1}{|\mathcal{D}_i^{\prime}|} \sum\nolimits_{j \in \mathcal{D}_i^{\prime}} \!\!\! \nabla f_i(\boldsymbol{w}, j) \|_2 \nonumber \\
&\leq \frac{1}{|\mathcal{D}_i|} \| \nabla f_i(\boldsymbol{w}, a) - \nabla f_i(\boldsymbol{w}, b) \|_2 \leq \frac{2\mathcal{S}}{|\mathcal{D}_i|}.
\end{align}

Under the Gaussian mechanism in the $\rho$-$z$CDP mechanism, the noise variance required to achieve a privacy budget of $\rho_i(t)$ is given by $\delta_i^2(t) = \frac{\Delta Q^2}{2\rho_i(t)}$. Applying the sensitivity bound in Eq.~\eqref{sensitivity}, the proposition holds.\end{proof}

\section{Proof of Proposition \ref{global_accuracy_loss}}
\begin{proof}
Following the result in~\citep{rakhlin2011making}, suppose the global loss function $F(\boldsymbol{w}(t))$ satisfies the $\beta$-Lipschitz smoothness and the Polyak–Łojasiewicz condition, the following inequality holds:
\begin{align}
\mathbb{E}[F(\boldsymbol{w}(t)) - F(\boldsymbol{w}^*)] \leq \frac{\beta}{2\mu^2 t} \mathbb{E}[ \| \nabla \widetilde{F}(\boldsymbol{w}(t)) \|_2^2 ],
\label{eq:pl_loss_bound_en} 
\end{align}
where $\mu$ represents the PL constant and the perturbed global gradient $\nabla \widetilde{F}(\boldsymbol{w}(t)) \!=\! \sum\nolimits_{i=1}^N \theta_i (\nabla F_i(\boldsymbol{w}(t)) \!+\! \boldsymbol{n}_i(t) ) \!=\! \nabla F(\boldsymbol{w}(t)) \!+\! \sum\nolimits_{i=1}^N \theta_i \boldsymbol{n}_i(t)$, with the additive noise $\boldsymbol{n}_i(t)$ injected by client $x_i$ and modeled as a zero-mean Gaussian variable with covariance $\delta_i^2(t) \boldsymbol{I}_p$. Since the noise terms are independent and have zero mean, the expectation of the squared norm of the perturbed gradient becomes:
\begin{align}
\mathbb{E}[ \| \nabla \widetilde{F}(\boldsymbol{w}(t)) \|_2^2 ]
&= \mathbb{E}[ \| \nabla F(\boldsymbol{w}(t)) \|_2^2 ] 
+ \mathbb{E}[ \| \sum\nolimits_{i=1}^N \theta_i \boldsymbol{n}_i(t) \|_2^2 ] \nonumber \\ 
&\leq \mathcal{E}^2 + p \sum\nolimits_{i=1}^N \theta_i^2 \delta_i^2(t),
\label{eq:noise_grad_bound_en}
\end{align}
where $\mathbb{E}[ \| \nabla F(\boldsymbol{w}(t)) \|_2^2 ] \leq \mathcal{E}^2$, and $p$ denotes the dimensionality of the model parameter $\boldsymbol{w}$.
Substituting Eq.~(\ref{eq:noise_grad_bound_en}) into Eq.~(\ref{eq:pl_loss_bound_en}) yields the upper bound of the global accuracy loss as expressed in Eq.~(\ref{accuracy_loss}).   \end{proof}

\section{Proof of Lemma \ref{optimal_unit_reward_lemma}}
\begin{proof}
By substituting each client's optimal privacy budget $\rho_{i}^{*}(t)$, as defined by Eq.~(\ref{optimal_privacy_budget}), into the central server's cost function in Eq.~(\ref{central_server_cost}), we obtain the following reformulated expression:
\begin{align}\label{final_central_server_cost}
\!\! U_{t}(r(t), \boldsymbol\rho^{*}(t)) \!=&\ \tau \sum\nolimits_{i=1}^{N} \frac{\epsilon_{i}}{t} (\frac{r(t) - b_{i}}{2a_{i}} - \alpha N \phi_{i}(t))^{-1} \!+\! (1 \!-\! \tau) \nonumber \\ 
&\times  \sum\nolimits_{i=1}^{N} [\frac{r^{2}(t) \!-\! b_{i}r(t)}{2a_{i}} \!-\! \alpha N \phi_{i}(t)r(t)].
\end{align}

We observe that the reformulated cost function in Eq.~(\ref{final_central_server_cost}) is a high-order, non-polynomial function with respect to the unit-reward parameter $r(t)$. Consequently, to analyze the optimality and convexity properties of the cost function, we derive its first-order and second-order derivatives as follows:
\begin{align} 
\!\!\!\!\!\!\!\! \frac{\partial U_{t}}{\partial r(t)} \!=& - \tau \sum\nolimits_{i=1}^{N} \frac{\epsilon_{i}}{2 t a_{i}} ( \frac{r(t)-b_{i}}{2a_{i}} - \alpha N \phi_{i}(t) )^{-2}) \nonumber \\ 
&+ (1 - \tau) \sum\nolimits_{i=1}^{N} [ \frac{2 r(t) - b_{i}}{2 a_{i}}- \alpha N \phi_{i}(t) ],  \label{first_order_CS} \\
\!\!\!\!\!\!\!\! \frac{\partial^{2} U_{t}}{\partial r^{2}(t)} \!=&\ \tau\! \sum\nolimits_{i=1}^{N}\! \frac{\epsilon_{i}}{2 t a_{i}^{2}} ( \frac{r(t) \!-\! b_{i}}{2a_{i}} \!-\! \alpha  N  \phi_{i}(t) )^{-3} \!+\!  \frac{(1 \!-\! \tau)N}{a_{i}}. \!\!\!\!  \label{second_order_CS}
\end{align}

From Eq.~(\ref{second_order_CS}), we hold $\frac{\partial^{2} U_{t}}{\partial r^{2}(t)} > 0$, which implies that the cost function $U_{t}$ is strictly convex in the feasible domain. To demonstrate the existence of the minimum for the cost function in Eq.~(\ref{final_central_server_cost}), we analyze the root of the first-order derivative given in Eq.~(\ref{first_order_CS}) within the feasible region of $r(t) \in (0,+\infty)$:
\begin{align}\label{limit}
\lim_{r(t) \rightarrow 0} \frac{\partial U_{t}}{ \partial r(t)} \rightarrow - \infty, \ \lim_{r(t) \rightarrow +\infty} \frac{\partial U_{t}}{ \partial r(t)} \rightarrow + \infty.
\end{align}

Eq.~(\ref{limit}) shows that the derivative transitions from negative to positive over the domain of $r(t)$, which confirms the existence of at least one root for $\frac{\partial U_t}{\partial r(t)} = 0$. By calculating the equation $\frac{\partial U_{t}}{\partial r(t)} = 0$, we can obtain the function of optimal unit-reward $r^{*}(t)$ in Eq.~(\ref{optimal_unit_reward}).  \end{proof}

\begin{figure*}[t]
\setlength{\abovecaptionskip}{2pt}
\centering
    \begin{minipage}{122pt}
        \includegraphics[width=1.0\textwidth, trim=5 10 8 5,clip]{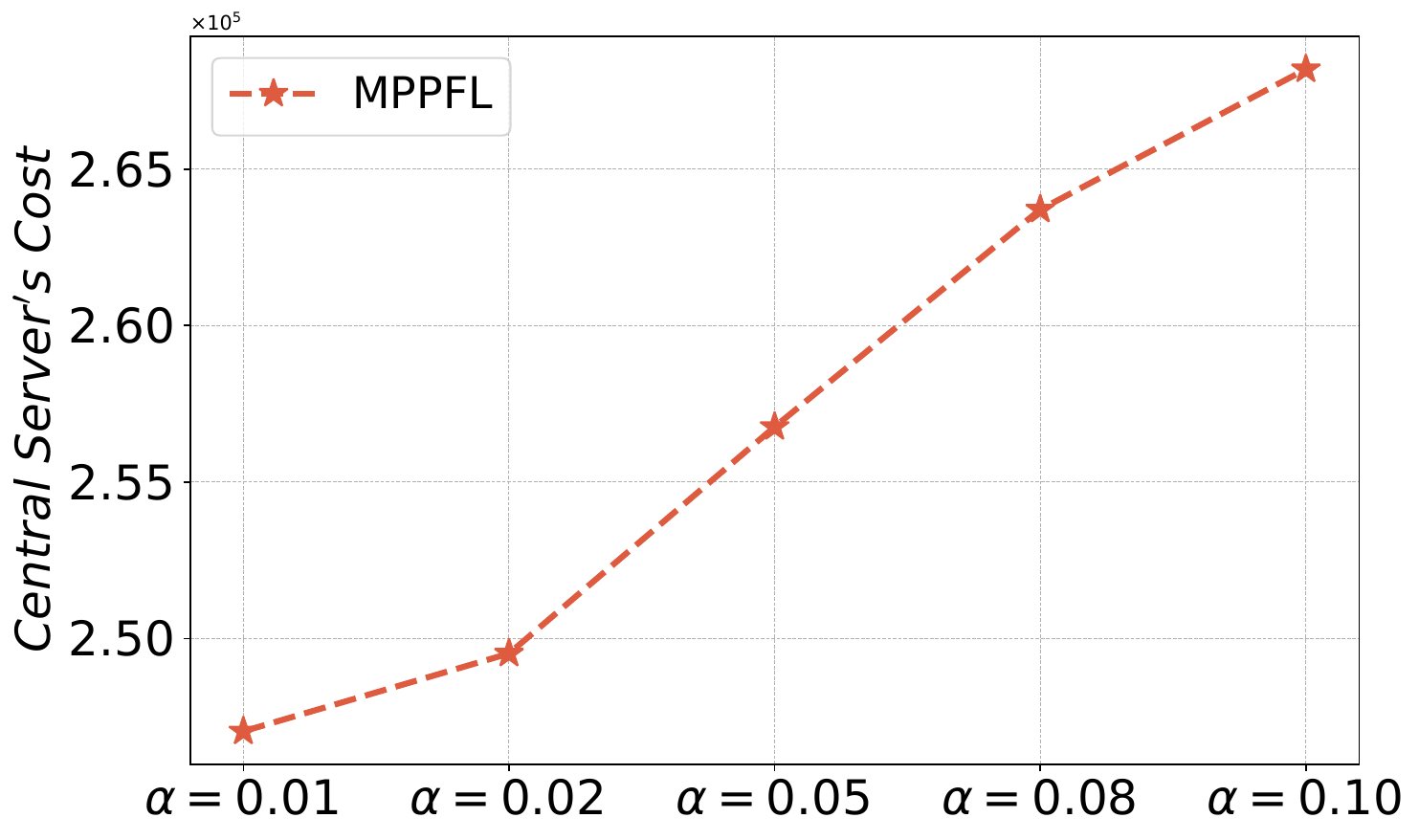}
        \vspace{-20pt}
        \caption{The central server’s cost on CINIC-10 dataset with different $\alpha$.}
        \label{csc_t80_n20_alpha}
    \end{minipage}
    \begin{minipage}{122pt}
        \includegraphics[width=1.0\textwidth, trim=5 10 8 5,clip]{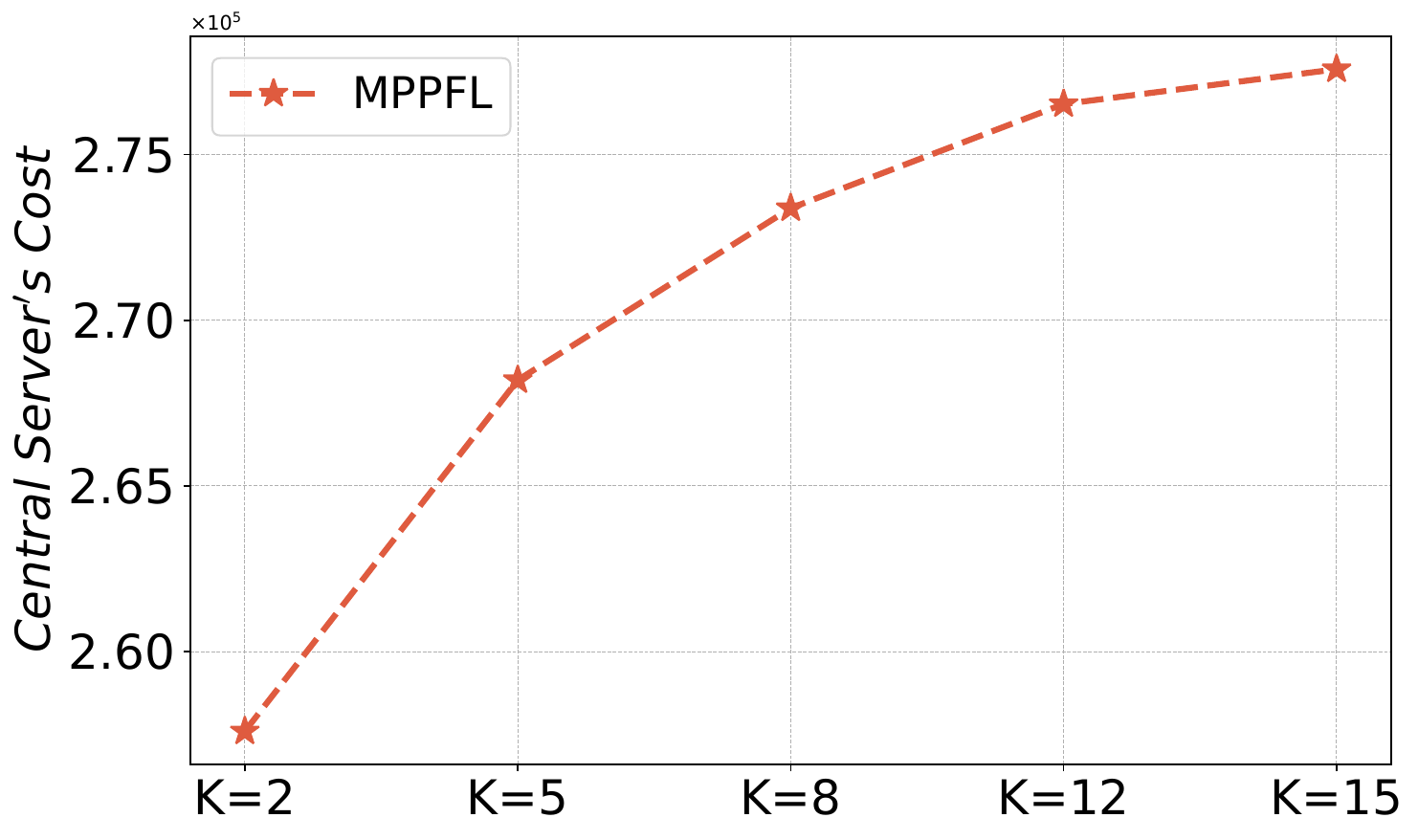}
        \vspace{-20pt}
        \caption{The central server’s cost on CINIC-10 dataset with different $K$.}
        \label{csc_t80_n20_k}
    \end{minipage}
    \hspace{2pt}
    \begin{minipage}{122pt}
        \includegraphics[width=1.0\textwidth, trim=35 40 30 35,clip]{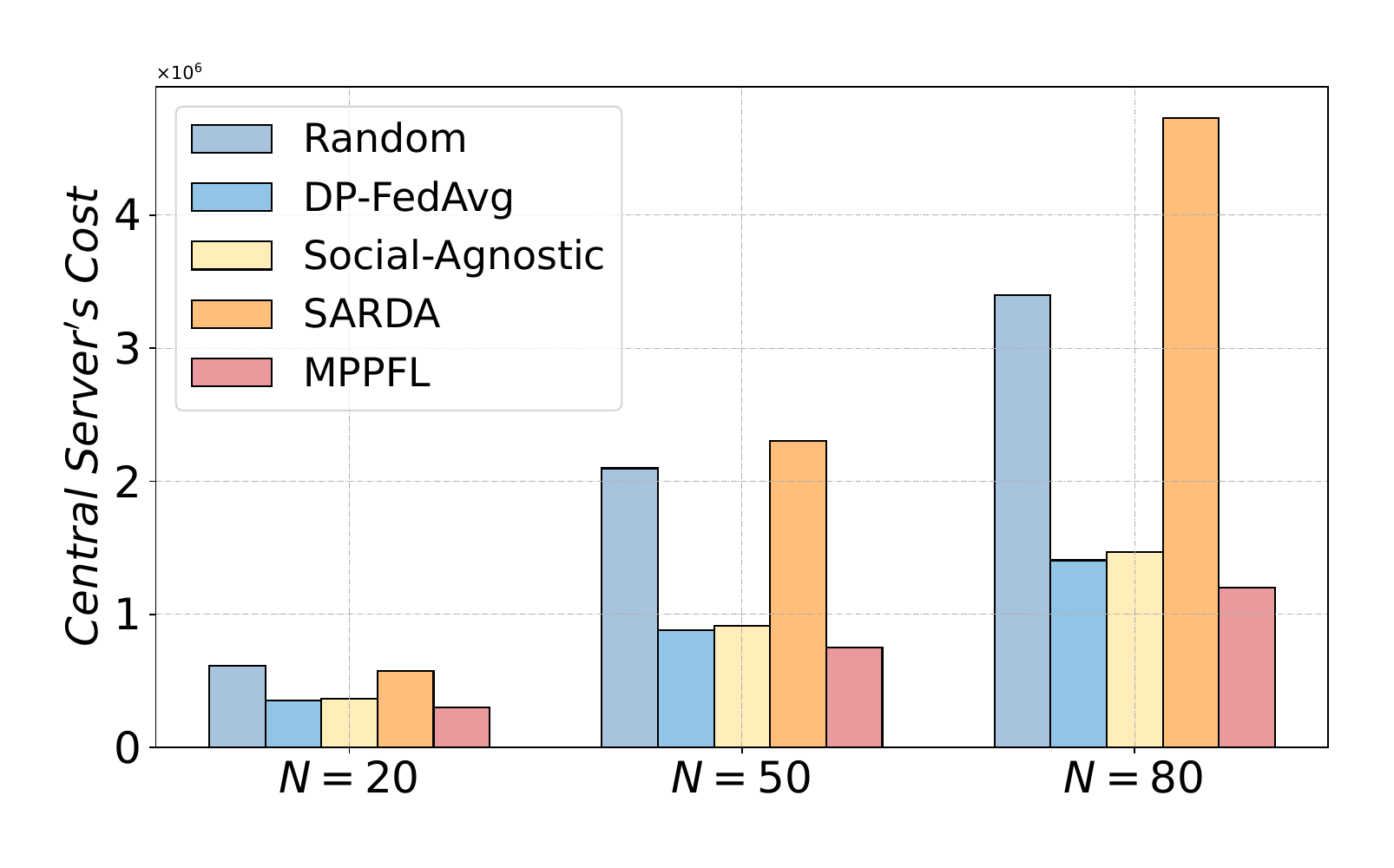}
        \vspace{-20pt}
        \caption{The central server’s cost comparison on CIFAR-10 dataset with different $N$.}
        \label{csc_cinic_cifar10}
    \end{minipage}
    \hspace{2pt}
    \begin{minipage}{122pt}
        \includegraphics[width=1.0\textwidth, trim=35 40 30 35,clip]{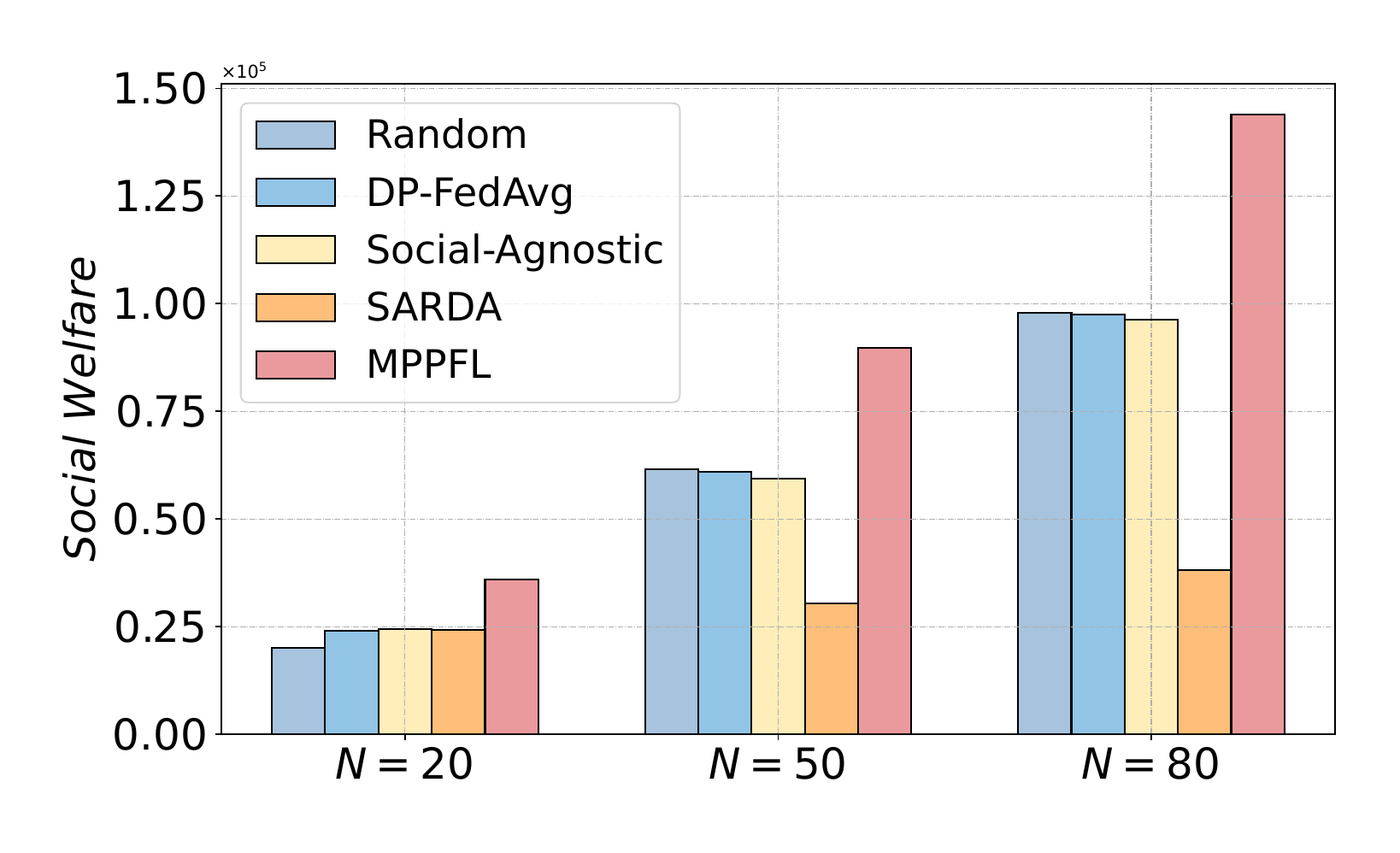}
        \vspace{-20pt}
        \caption{The social welfare comparison on CIFAR-10 dataset with different $N$.}
        \label{sw_cinic_cifar10}
    \end{minipage}
\vspace{-12pt}    
\end{figure*}

\begin{figure*}[t]
\setlength{\abovecaptionskip}{2pt}
\centering
    \begin{minipage}{160pt}
        \includegraphics[width=1.0\textwidth, trim=5 10 5 5,clip]{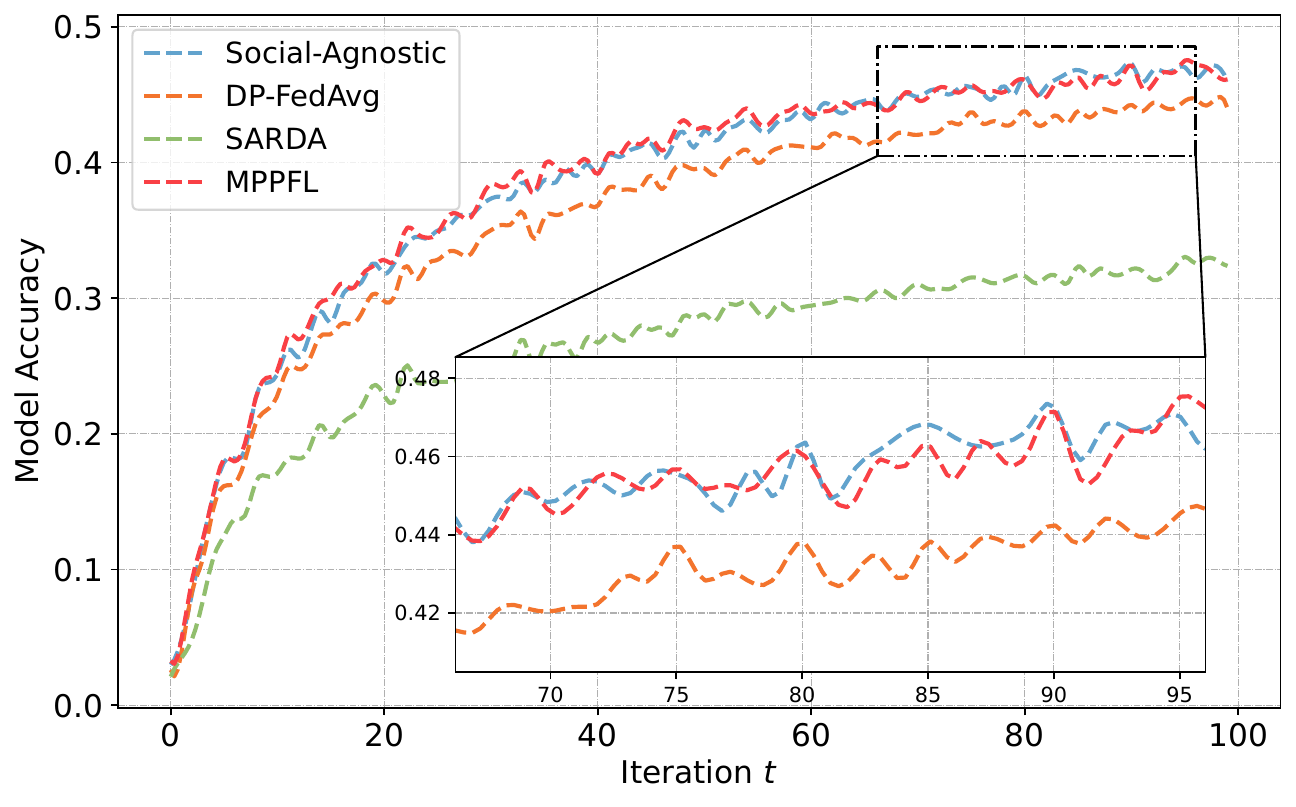}
        \vspace{-20pt}
        \caption{Accuracy curve on CIFAR-100 dataset with non-IID (Dir = 0.3).}
        \label{dir_03_acc_t100n20}
    \end{minipage}
    \hspace{2pt}
    \begin{minipage}{160pt}
        \includegraphics[width=1.0\textwidth, trim=5 10 5 5,clip]{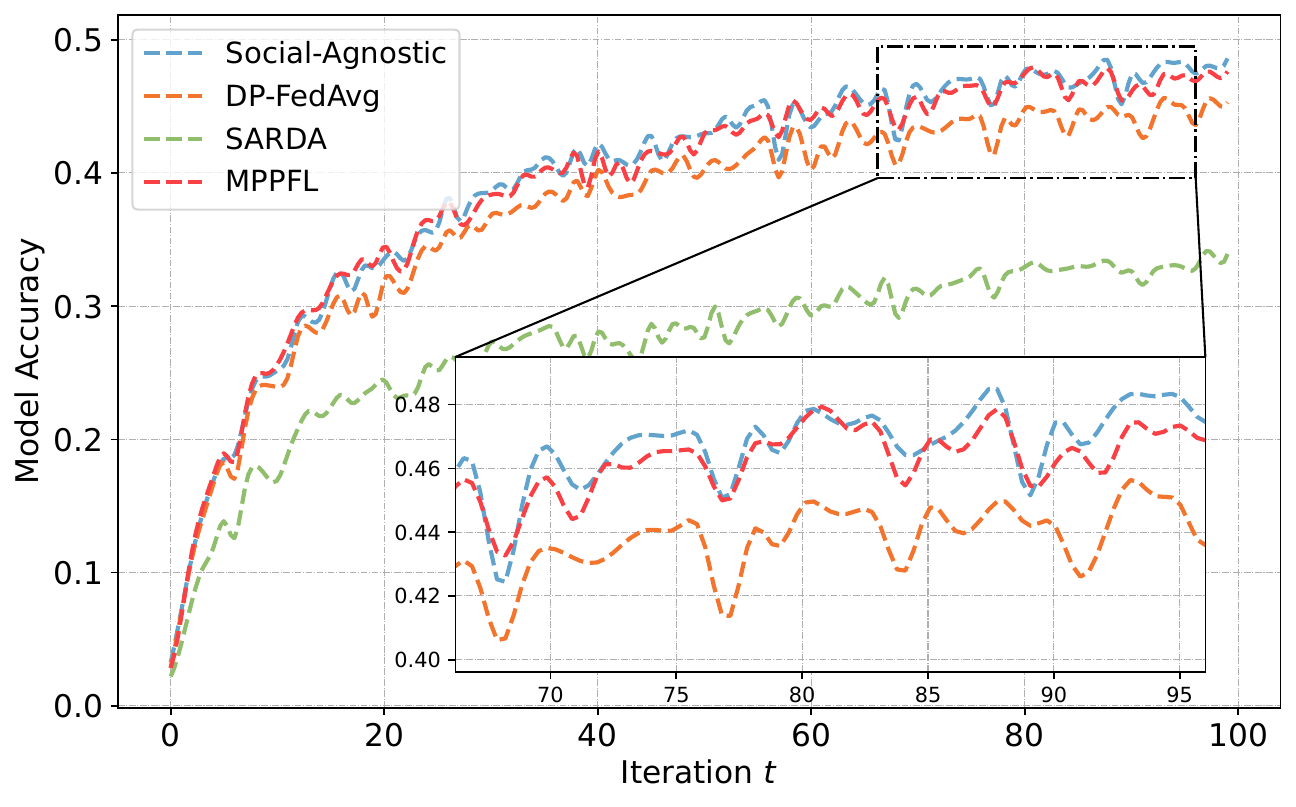}
        \vspace{-20pt}
        \caption{Accuracy curve on CIFAR-100 dataset with non-IID (Dir = 0.6).}
        \label{dir_06_acc_t100n20}
    \end{minipage}
    \hspace{2pt}
    \begin{minipage}{160pt}
        \includegraphics[width=1.0\textwidth, trim=5 10 5 5,clip]{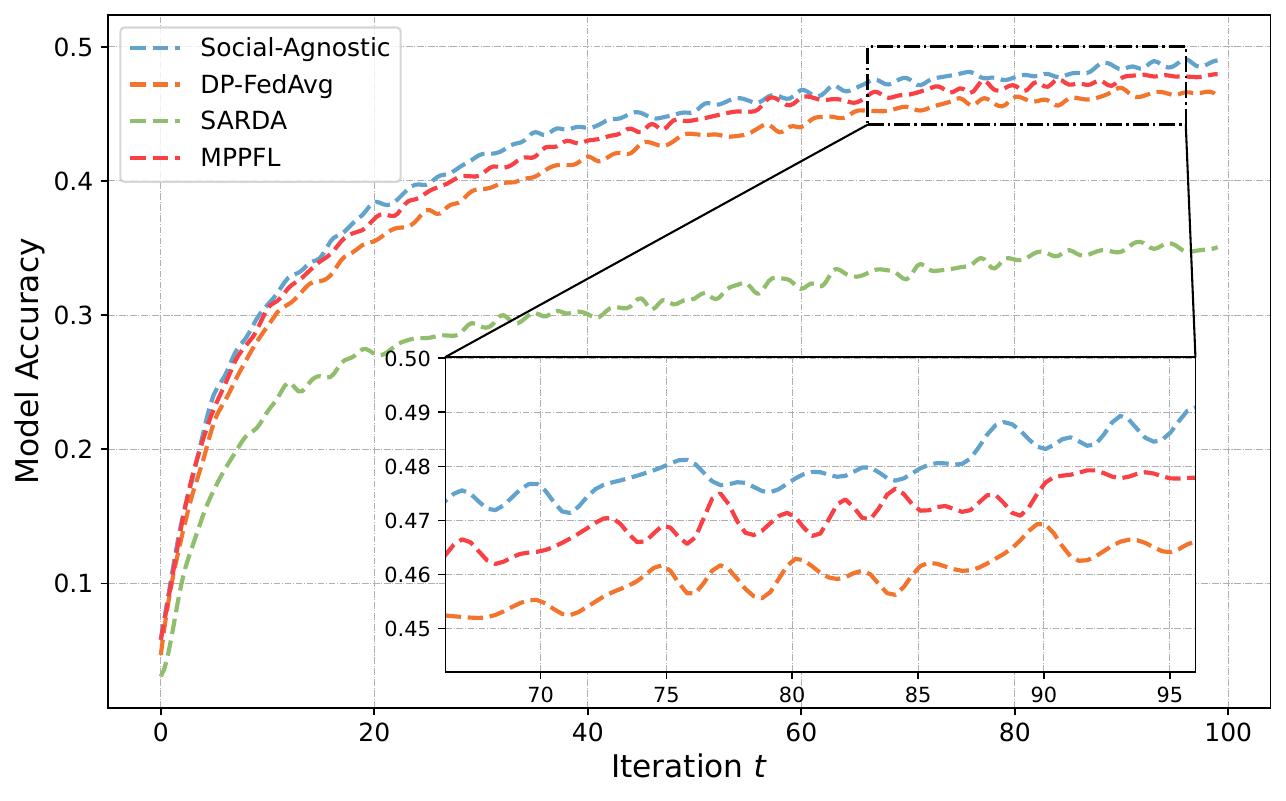}
        \vspace{-20pt}
        \caption{Accuracy curve on CIFAR-100 dataset with IID data.}
        \label{iid_acc_t100n20}
    \end{minipage}
\vspace{-12pt}    
\end{figure*}

\section{Proof of Theorem \ref{theorem_fixed_point}}
\begin{proof}
We begin by examining the external privacy leakage risk coefficient $\sigma_{ij} = \sum\nolimits_{k=1}^K \lambda^{k-1} (\boldsymbol{\widetilde{W}}^k)_{ij}$. Since $\boldsymbol{\widetilde{W}}$ is a row-normalized matrix, each row sums to 1.  By induction, $\boldsymbol{\widetilde{W}}^k$ also remains row-normalized for positive integer $k$. Given that each entry $(\boldsymbol{\widetilde{W}}^k)_{ij}$ is non-negative and bounded by $1$, we derive the upper bound of $\sigma_{ij}$ as:
\begin{align}
    \sigma_{ij} = \sum\nolimits_{k=1}^K \lambda^{k-1} (\boldsymbol{\widetilde{W}}^k)_{ij} \leq \sum\nolimits_{k=1}^K \lambda^{k-1} = \frac{1 - \lambda^K}{1 - \lambda},
\end{align}
where the inequality follows from the geometric series summation. This bound ensures that $\sigma_{ij}$ decays exponentially with the propagation length $K$, regulated by the decay factor $\lambda$, thereby limiting the multi-hop influence client $x_j$ can exert on client $x_i$.

Based on Definition \ref{mean_field} and Theorem \ref{optimal_rho}, for each client $x_i, i\in \{1, 2, \ldots, N\}$, we substitute the mean-field esimator $\phi_{i}(t) = \frac{1}{N} \sum\nolimits_{j = 1}^{N} \sigma_{ij} \times \rho_{j}(t)$ into Eq.~(\ref{optimal_privacy_budget}). Then, the expression of the privacy budget of client $x_i, \ i \!\in\! \{1, 2, \ldots, N\}$ at iteration $t \!\in\! \{0,1, \ldots, T\}$ can be further reformulated as follows:
\begin{align}\label{optimal_rho_reformulated}
    \rho_{i}(t) = \frac{r(t) - b_{i}}{2a_{i}} - \alpha \sum\nolimits_{j = 1}^{N} \sigma_{ij} \times \rho_{j}(t).
\end{align}

From Eq.~(\ref{optimal_rho_reformulated}), we notice that the privacy budget $\rho_{i}(t)$ of client $x_i, \ i \!\in\! \{1, 2, \ldots, N\}$ at $t$-th iteration with $t \!\in\! \{0,1, \ldots, T\}$ is a continuous function of the entire set $\{ \rho_{i}(t)|t\in \{0,1,\ldots,T\} ,i\in \{1, 2,\ldots,N\} \}$ of all clients over time. We define a continuous mapping from $\{ \rho_{i}(t)|t\in \{0,1,\ldots,T\},i\in \{1,2,\ldots,N\} \}$ to client $x_i$'s privacy budget $\rho_{i}(t)$ at $t$-th iteration by:
\begin{align}\label{mapping}
\Theta_{i}^{t}(\{ \rho_{i}(t)|t\in \{0,1,\ldots,T\} ,i\in \{1,2,\ldots,N\} \}) \!=\! \rho_{i}(t). \!
\end{align}

To summarize any possible mapping $\Theta_{i}(t)$, we define the following vector functions as a mapping from $\{\rho_{i}(t)|t \!\in\! \{0, 1, \ldots, T\}, i \!\in\! \{1, 2, \ldots, N\} \}$ to privacy budget set of all clients at any iteration $t$:
\begin{align} \label{mapping_all_clients_iterations}
&\Theta(\{ \rho_{i}(t)|t\in \{0,1,\ldots,T\} ,i\in \{1,2,\ldots,N\} \})  \nonumber \\
=&(\Theta_{1}^{0}(\{ \rho_{i}(t)|t\in \{0,1,\ldots,T\} ,i\in \{1,2,\ldots,N\} \}),\ldots, \nonumber \\
&(\Theta_{1}^{T}(\{ \rho_{i}(t)|t\in \{0,1,\ldots,T\} ,i\in \{1,2,\ldots,N\} \}),\ldots, \nonumber \\
&(\Theta_{N}^{0}(\{ \rho_{i}(t)|t\in \{0,1,\ldots,T\} ,i\in \{1,2,\ldots,N\} \}),\ldots, \nonumber \\
&(\Theta_{N}^{T}(\{ \rho_{i}(t)|t\in \{0,1,\ldots,T\} ,i\in \{1,2,\ldots,N\} \})).
\end{align}

Thus, based on Eq.~(\ref{mapping_all_clients_iterations}), the fixed point of mapping $\Theta(\{ \rho_{i}(t)|t \in \{0,1,\ldots,T\} ,i\in \{1,2,\ldots,N\} \})$ should be reached to make $\phi_{i}(t)$ replicate $\frac{1}{N} \sum\nolimits_{j = 1}^{N} \sigma_{ij} \times \rho_{j}(t)$. Assume that a boundary condition of privacy budget $\rho_{i}(t)$ exists, i.e., $\rho_{l} \le \rho_{i}(t) \le \rho_{h}$ with $\rho_{l} = \frac{m_l-\alpha S m_h}{1-\alpha^{2} S^{2}}$ and $\rho_{h} = \frac{m_h-\alpha S m_l}{1-\alpha^{2} S^{2}}$ for $t\in \{0,1,\ldots,T\} ,i\in \{1,2,\ldots,N\}$, where $m_l = \min_{i,t}\frac{r(t) - b_{i}}{2a_{i}},m_h = \max_{i,t}\frac{r(t) - b_{i}}{2a_{i}}$, $S = \frac{1-\lambda^K}{1-\lambda}$ and $\alpha S<1$, thereby, we define a continuous space $\Omega = [\rho_{l}, \rho_{h}]^{N\times(T+1)}$ for $\rho_{i}(t)$, which is a compact convex set in $\mathbb{R}^{N\times (T+1)}$. We can verify that if $\rho_j(t)\in[\rho_l,\rho_h]$ for all $j$, the right-hand side of Eq.~(\ref{optimal_rho_reformulated}) also lies within $[\rho_l,\rho_h]$. From Eq.~(\ref{mapping}), each mapping $\Theta_{i}^{t}$ is continuous in the continuous space $\Omega$. Consequently, $\Theta$ is a continuous self-mapping on the compact convex space from $\Omega$ to $\Omega$. By Brouwer’s Fixed-Point Theorem, $\Theta$ has a fixed point in $\Omega$.   

Based on the above formulation, we have defined the Complete Metric Space $\Omega$ over the set of all privacy budget parameters $\boldsymbol{\rho}$. To quantify the divergence between any two variables in this space, we introduce a distance function defined by the $\ell_2$-norm, i.e., $\Gamma = \|\rho_{i}(t) - \rho_{m}(t)\|_{2}^{2}$, and have $\Gamma(\Theta_{i}^{t}(\rho_{i}(t)), \Theta_{m}^{t}(\rho_{m}(t))) \leq \nu \| \rho_i(t) - \rho_m(t) \|_2^2$, where $\nu = \alpha^2 S^2$ satisfies $0 \leq \nu <1$, indicating that the mapping $\Theta_{i}^{t}$ is a contraction mapping on $\Omega$ according to Banach’s Fixed Point Theorem, which guarantees the existence and uniqueness of the fixed point of $\Theta_{i}^{t}$, denoted by $\rho^{*}_{i}(t)$. Moreover, the contraction property of $\Theta_{i}^{t}$ implies that the discrepancy between successive elements in the series is reduced by a multiplicative factor $\nu$ at each iteration. Specifically, for any given $\mathcal{J}$, there exists an integer $\mathcal{H} > 0$ such that for all $u_1,u_2 > \mathcal{H}$, the inequality $\Gamma(\Theta_{u_1}^{t}(\rho_{u_1}(t)),\Theta_{u_2}^{t}(\rho_{u_2}(t))) < \mathcal{J}$ holds. Thus, the sequence generated by $\Theta_{i}^{t}$ forms a Cauchy sequence and converges to the unique fixed point $\rho^{*}_{i}(t)$.   \end{proof}

\begin{table}[t]
\renewcommand\arraystretch{1.0}
\caption{Basic Information of Datasets.}
\begin{center}
\resizebox{0.48\textwidth}{!}{\begin{tabular}{c|c|c|c|c}
\toprule[1pt]
\textbf{Datasets}&\textbf{Traing Set Size}&\textbf{Test Set Size}&\textbf{Class} &\textbf{Image Size}  \\ \cmidrule[0.5pt](l{1pt}r{0pt}){1-5}

\textbf{FMNIST} & 60,000 & 10,000  & 10 & 1 $\times$ 28 $\times$ 28  \\ \cmidrule[0.5pt](l{1pt}r{0pt}){1-5}

\textbf{CIFAR-10} & 50,000 & 10,000  & 10 & 3 $\times$ 32 $\times$ 32  \\ \cmidrule[0.5pt](l{1pt}r{0pt}){1-5}

\textbf{SVHN} & 73,257 & 26,032  & 10 & 3 $\times$ 32 $\times$ 32  \\ \cmidrule[0.5pt](l{1pt}r{0pt}){1-5}

\textbf{CIFAR-100} & 50,000 & 10,000  & 100 & 3 $\times$ 32 $\times$ 32  \\ \cmidrule[0.5pt](l{1pt}r{0pt}){1-5}

\textbf{CINIC-10} & 90,000 & 90,000  & 10 & 3 $\times$ 32 $\times$ 32  \\ 
\bottomrule[1pt]
\end{tabular}}
\label{basic_information_of_datasets}
\end{center}
\end{table}

\begin{table}[t]
\setlength{\abovecaptionskip}{0pt} 
\caption{FL Training Hyperparameter Settings.}
\renewcommand\arraystretch{1.0}
\begin{center}
\resizebox{0.48\textwidth}{!}{\begin{tabular}{c|c|c|c|c}
\toprule[1pt]
\textbf{Datasets} & \textbf{Learning Rate}&\textbf{Batchsize} & \textbf{Global Iteration} & \textbf{Local Epoch}  \\ \cmidrule[0.5pt](l{1pt}r{0pt}){1-5}

\textbf{FMNIST} & 0.01 & 32  & 30 & 5 \\ \cmidrule[0.5pt](l{1pt}r{0pt}){1-5}

\textbf{CIFAR-10} & 0.01 & 32  & 80 & 5 \\ \cmidrule[0.5pt](l{1pt}r{0pt}){1-5}

\textbf{SVHN} & 0.01 & 256  & 120 & 5 \\ \cmidrule[0.5pt](l{1pt}r{0pt}){1-5}

\textbf{CIFAR-100} & 0.01 & 64  & 100 & 5 \\ \cmidrule[0.5pt](l{1pt}r{0pt}){1-5}

\textbf{CINIC-10} & 0.01 & 64  & 60 & 5 \\ 
\bottomrule[1pt]
\end{tabular}}
\label{hyperparameter}
\end{center}
\vspace{-6pt}
\end{table}

\section{Additional Experiment Results}\label{other_Results}

Table \ref{basic_information_of_datasets} summarizes the basic characteristics of the five datasets used in our experiments, including the number of training and test samples, the number of classes, and the input image size. The selected datasets span diverse domains and difficulty levels: FMNIST is a grayscale dataset of fashion images with 10 categories, while CIFAR-10, CIFAR-100, SVHN, and CINIC-10 are RGB datasets with varying class sizes and sample distributions. Notably, CINIC-10 contains a larger and more balanced training and test split, making it suitable for scalability studies. Table \ref{hyperparameter} presents the federated training hyperparameter settings for each dataset. 

Figs.~\ref{csc_t80_n20_alpha} and~\ref{csc_t80_n20_k} present the central server’s cost on the CINIC-10 dataset under different values of $\alpha$ and the longest propagation length $K$, respectively. As shown in Figure \ref {csc_t80_n20_alpha}, increasing $\alpha$, which reflects clients’ sensitivity to external privacy risks, leads to a growing server cost. This is because more risk-averse clients require stronger incentives to participate, resulting in higher payments from the server. In Figure \ref {csc_t80_n20_k}, a similar trend is observed as $K$ increases, indicating that longer propagation paths amplify the accumulation of external risks, thereby driving up the overall cost. These results highlight that both client sensitivity and the structure of the social network substantially influence the server’s incentive expenditure. We also compare the utilities of different numbers of clients on the CIFAR-10 dataset, as shown in Figs. \ref{csc_cinic_cifar10} and \ref{sw_cinic_cifar10}. It serves as a supplementary experiment to support the effectiveness of MPPFL in controlling the central server's cost and coordinating client behavior efficiency with different participating client sizes.

Figs. \ref{dir_03_acc_t100n20}–\ref{iid_acc_t100n20} present the accuracy performance of various privacy-preserving mechanisms on the CIFAR-100 dataset under different levels of data heterogeneity. As the Dirichlet parameter increases from $0.3$ to $0.6$, and eventually to the IID setting, the accuracy of MPPFL remains consistently close to that of Social-Agnostic. In contrast, DP-FedAvg and SARDA exhibit lower performance across all settings, primarily due to their failure to account for external privacy risks and the absence of an adaptive privacy budget design.

\end{document}